\useunder{\uline}{\ul}{}
\DeclareMathOperator*{\argmin}{arg\,min}
\newcommand{\R}{{\mathbb{R}}}
\newcommand{\E}{{\mathbb{E}}}
\newcommand{\norm}[1]{{\left\vert\kern-0.25ex\left\vert\kern-0.25ex\left\vert #1
\right\vert\kern-0.25ex\right\vert\kern-0.25ex\right\vert}}
\newtheorem{lemma}{Lemma}
\newtheorem{definition}{Definition}
\newtheorem{proposition}{Proposition}
\newtheorem{theorem}{Theorem}
\journal{xx}
\begin{document}

\begin{frontmatter}




\title{Adversarially robust generalization theory via Jacobian regularization for deep neural networks } 


%

\author[1]{Dongya Wu\corref{cor1}}
\ead{wudongya@nwu.edu.cn}         
\author[2]{Xin Li\corref{cor2}}
\ead{lixin@nwu.edu.cn}

\address[1]{School of Information Science and Technology, Northwest University, Xi’an, 710069, China}
\address[2]{School of Mathematics, Northwest University, Xi’an, 710069, China}

\cortext[cor1]{Corresponding author}
\cortext[cor2]{Corresponding author}

\begin{abstract}
Powerful deep neural networks are vulnerable to adversarial attacks. To obtain adversarially robust models, researchers have separately developed adversarial training and Jacobian regularization techniques. There are abundant theoretical and empirical studies for adversarial training, but theoretical foundations for Jacobian regularization are still lacking. In this study, we show that Jacobian regularization is closely related to adversarial training in that $\ell_{2}$ or $\ell_{1}$ Jacobian regularized loss serves as an approximate upper bound on the adversarially robust loss under $\ell_{2}$ or $\ell_{\infty}$ adversarial attack respectively. Further, we establish the robust generalization gap for Jacobian regularized risk minimizer via bounding the Rademacher complexity of both the standard loss function class and Jacobian regularization function class. Our theoretical results indicate that the norms of Jacobian are related to both standard and robust generalization. We also perform experiments on MNIST data classification to demonstrate that Jacobian regularized risk minimization indeed serves as a surrogate for adversarially robust risk minimization, and that reducing the norms of Jacobian can improve both standard and robust generalization. This study promotes both theoretical and empirical understandings to adversarially robust generalization via Jacobian regularization.
\end{abstract}



\begin{keyword}
deep neural networks \sep
Jacobian regularization \sep adversarial attack \sep robust generalization \sep Rademacher complexity



\MSC 68T07

\end{keyword}

\end{frontmatter}



\section{Introduction}
Deep neural networks have achieved the state-of-the-art performance in various tasks such as image classification, speech recognition, object recognition, natural language processing and so on \citep{lecun2015deep}. However, deep neural networks are vulnerable to adversarial attacks \citep{Szegedy2013IntriguingPO,Goodfellow2014ExplainingAH} and the generalization performance is dramatically impaired by these adversarial samples \citep{Madry2017TowardsDL}. 

To obtain adversarially robust generalization for deep neural networks, adversarial training \citep{Goodfellow2014ExplainingAH, Huang2015LearningWA, Shaham2015UnderstandingAT, Madry2017TowardsDL, Gowal2020UncoveringTL} has been developed as one of the most effective defense methods against adversarial attacks \citep{Qian2022ASO}. Adversarial training aims to achieve robust generalization on each sample $x$ within a small perturbation $\| x' - x \|_{p} \leq \epsilon$ via the empirical adversarial risk minimization
\begin{equation} \label{arm-objective}
\min_{f \in \mathcal{F}} \frac1n \sum_{i=1}^{n} \max_{\| x'_{i} - x_{i} \|_{p} \leq \epsilon} \ell(f(x'_{i}),y_{i}),
\end{equation}
where $\mathcal{F}$ is the hypothesis class, $\{(x_{i},y_{i})\}_{i=1}^{n}$ are $n$ training examples drawn from distribution $\mathcal{D}$, $\ell(\cdot,\cdot)$ is the loss function. 
It is important to theoretically characterize the generalization of adversarial training by bounding the robust generalization gap between the expected adversarial risk and the empirical adversarial risk
\begin{equation} \label{rg-gap}
\E_{(x,y) \sim \mathcal{D}} \max_{\| x' - x \|_{p} \leq \epsilon} \ell(f(x'),y) - \frac1n \sum_{i=1}^{n} \max_{\| x'_{i} - x_{i} \|_{p} \leq \epsilon} \ell(f(x'_{i}),y_{i}).
\end{equation}

In the standard setting without adversarial attacks, it is well known in statistical learning theory that the generalization gap can be bounded by the Rademacher complexity \citep{Bartlett2003RademacherAG} and then the Rademacher complexity of deep neural networks has been established \citep{RN481,Neyshabur2017APA,RN488,RN323}. To bound the robust generalization gap in the adversarial setting, \cite{Khim2018AdversarialRB} and \cite{Yin2018RademacherCF} showed that the robust generalization gap can be bounded by the adversarial Rademacher complexity, which is an extending of the Rademacher complexity. However, it is difficult to bound the adversarial Rademacher complexity of deep neural networks due to the $\max$ operation in the adversarially robust loss, i.e., $\max_{\| x' - x \|_{p} \leq \epsilon} \ell(f(x'),y)$. One type of studies designed surrogate losses of the adversarially robust loss and provided bounds on the surrogate losses, such as the tree-transformation surrogate loss \citep{Khim2018AdversarialRB}, the 
SDP relaxation surrogate loss \citep{Yin2018RademacherCF} and the FGSM surrogate loss \citep{Gao2021TheoreticalIO}. Another type of studies bounds the adversarial Rademacher complexity via calculating the covering number of adversarial function classes \citep{Xiao2022AdversarialRC, Mustafa2022OnTG, xiao2024bridging}.

Another way to achieve adversarially robust generalization is via Jacobian regularization \citep{7934087, Varga2017GradientRI, Hoffman2019RobustLW, Jakubovitz2018ImprovingDR, Chan2019JacobianAR, LIU2024109902}. \cite{Hoffman2019RobustLW} showed that Jacobian regularization increases classification margins and outperforms an adversarial training defense to achieve lower adversarially robust generalization error. \cite{LIU2024109902} utilized Jacobian regularization to improve both robust generalization and prediction interpretability in the adversarial setting. There is also an empirical study which showed that the norm of Jacobian correlates well with generalization \citep{Novak2018SensitivityAG}. Even though it is well known that Jacobian regularization can make the model more stable with respect to input perturbations \citep{Hoffman2019RobustLW}, however, there is still no theoretical guarantee for the adversarially robust generalization gap achieved via Jacobian regularization.

The contributions of our study are summarized in the following aspects. Firstly, we show that Jacobian regularization is closely related to adversarial training in the sense that the Jacobian regularized loss serves as an approximate upper bound on the adversarially robust loss, while former researches only study adversarial training and Jacobian regularization separately. Secondly, we provide the first robust generalization bound for deep neural networks via Jacobian regularization. The key ingredients in obtaining the robust generalization bound are the control of the Rademacher complexity of the standard loss function class and the Jacobian regularization function class. Thirdly, unlike most results focusing on the Rademacher complexity of real-valued models, we bound the Rademacher complexity for widely-used vector-valued models, such as the multi-class classification in practice. We do not use the commonly-used vector-contraction inequality to bound the Rademacher complexity of vector-valued models, due to the disadvantage that the result may explicitly depend on the output dimension. Instead, the covering number technique is adopted with no explicit dependence on the output dimension. In addition, the Rademacher complexity of the Jacobian regularization function class is novel to the best of our knowledge and has never been studied before.  Beyond the adversarially robust generalization, this will also be interested in the field where gradient regularization is used, such as nonparametric sparse variable selection \citep{RN278}.


\vskip 10pt
\textbf{Notations}. We end this section by introducing some useful notations. For a vector $x=(x^{1},x^{2},\cdots,x^{d})\in \R^{d}$, define the $\ell_{p}$-norm of $x$ as $\|x\|_{p}=(\sum_{j=1}^{d}|x^{i}|^{p})^{1/p}$ for $1\leq p< \infty$ with $\|x\|_{\infty}=\max_{i=1,2,\cdots,d}|x^{i}|$.  For a matrix $A\in \R^{d_{1}\times d_{2}}$, let $A_{ij}\ (i=1,\dots,d_{1},j=1,2,\cdots,d_{2})$ denote its $ij$-th entry, $A_{i\cdot}\ (i=1,\dots,d_{1})$ denote its $i$-th row, $A_{\cdot j}\ (j=1,2,\cdots,d_{2})$ denote its $j$-th column. Let $\|A\|_{F}$ denote the Frobenius norm, $\|A\|_{\text{op}}$ denote the spectral norm, and $\|A\|_{1,1}$ denote the $\ell_1$-norm along all the entries of $A$. 

\section{Related works}

Our study that establishes the robust generalization gap of Jacobian regularization is related to both the adversarial training and Jacobian or gradient regularization.

\textbf{Adversarial training}. Adversarial training is one of the most effective methods to achieve robust generalization and many studies have established the generalization property of adversarial training theoretically. The theoretical studies either adopt surrogate losses of the adversarially robust loss  \citep{Khim2018AdversarialRB, Yin2018RademacherCF, Gao2021TheoreticalIO} or directly bound the adversarial Rademacher complexity via covering number \citep{Xiao2022AdversarialRC, Mustafa2022OnTG, xiao2024bridging}. The loss with Jacobian regularization in our study serves as an upper bound of the first order approximation of adversarially robust loss. Therefore, our study also belongs to the method of surrogate losses of adversarially robust loss. The main difference between our study and previous studies lies in that the theory in our study matches with practice perfectly. Specifically, previous studies still rely on projected gradient descent adversarial training \citep{Madry2017TowardsDL} and thus there is a mismatch between the surrogate loss used in theory and the training objective used in practice. In our study, since Jacobian regularization can be easily optimized, we optimize the surrogate loss with Jacobian regularization to achieve robust generalization, in which way the practice is guaranteed to match with our theoretical analysis on the surrogate loss with Jacobian regularization.

\textbf{Jacobian or gradient regularization}. Though there are a variety of related works which empirically show that Jacobian or gradient regularization can improve models' robustness to adversarial attacks \citep{Varga2017GradientRI, Hoffman2019RobustLW, Jakubovitz2018ImprovingDR, Chan2019JacobianAR, LIU2024109902, Lyu2015AUG, NECO_a_00928, Ross2017ImprovingTA}, theoretical analysis on the generalization is still lacking. We here establish the generalization gap of Jacobian regularization and demonstrate its ability to achieve robust generalization theoretically. The main difference between the Jacobian regularization in \cite{Hoffman2019RobustLW} and that in our study lies in that we propose two Jacobian regularizations $\|\mathcal{J}\|_{F}^{2}$ and $\|\mathcal{J}\|_{1,1}$ against $\ell_{2}$ and $\ell_{\infty}$ attack respectively. And the Jacobian regularization in \cite{Hoffman2019RobustLW} is developed separately with adversarial training, while we link Jacobian regularization to adversarial training via the first order approximation. \cite{Lyu2015AUG} also link the gradient of the loss function to adversarial training via the first order approximation. The main difference is that Jacobian regularization regularize the gradient of the model. If we adopt the cross entropy loss or mean squared loss, the gradient of the loss function is small if the loss is small, but the gradient of the model can still be large. 

\section{Problem setup}

Let $x\in \mathbb{R}^d$ be the input prediction variable and $y\in \R^k$ be the response variable. The pair $(x,y)$ obeys a distribution $\mathcal{D}(x,y)$ on a sample space $\mathcal{X}\times \mathcal{Y}$, and the marginal distribution of $x$ is $\mathcal{D}_x(x)$. To learn the relationship between $x$ and $y$ in the standard setting, the learning problem is to minimize the expected risk $R(f)=\E_{(x,y)\sim\mathcal{D}} [ \ell(f(x),y) ]$, where $\ell(\cdot,\cdot)$ is the loss function and $f \in \mathcal{F}$ with $\mathcal{F}$ being the hypothesis class of deep neural networks in this study. A deep neural network $f_{\Theta}: \R^d \rightarrow \R$ with $L$ layers can be represented as
\begin{equation}\label{eq-DNN}
f_{\Theta}(x)=\theta_L\sigma(\theta_{L-1}\cdots\sigma(\theta_1 x)\cdots),
\end{equation}
where $\theta_l\in \R^{d_l\times d_{l-1}}(l=1,\cdots,L,\ d_0=d,\ d_L=k)$, $\Theta=\{\theta_L,\theta_{L-1},\cdots,\theta_1\}$, and $\sigma(\cdot)$ is the activation function, which is assumed to be 1$\text{-}$Lipschitz and to satisfy $\sigma(0)=0$. Then it follows that $\mathcal{F}=\{f_{\Theta}: f_{\Theta}\ \text{satisfies}\ \eqref{eq-DNN}\}$.
Since the distribution $\mathcal{D}$ is unknown, we can only get a finite dataset $\mathbb{D}$ with $n$ independently and identically distributed samples $\{(x_{i},y_{i})\}_{i=1}^{n}$ drawn from $\mathcal{D}$. We can thus minimize the empirical risk $R_{n}(f)=\frac1n \sum_{i=1}^{n}  \ell(f(x_{i}),y_{i})$. The gap between the expected risk and the empirical risk is known as the generalization gap.

In the adversarial setting, define the adversarially robust loss as 
\begin{equation} \label{robust-loss}
\tilde{\ell}(f(x),y) = \max_{\| x' - x \|_{p} \leq \epsilon} \ell(f(x'),y), 
\end{equation}
which is the maximum loss within a perturbation ball centered at $x$. Define the adversarial expected risk as $\tilde{R}(f)=\E_{(x,y)\sim\mathcal{D}} [ \tilde{\ell}(f(x),y) ]$ and the adversarial empirical risk as $\tilde{R}_{n}(f)=\frac1n \sum_{i=1}^{n}  \tilde{\ell}(f(x_{i}),y_{i})$. To learn models robust to adversarial attacks, adversarial training minimizes the adversarial empirical risk. The gap between the adversarial expected risk and the adversarial empirical risk is known as the robust generalization gap. 

Define the loss function class as $\ell_{\mathcal{F}}=\{\ell(f(x),y) | f \in \mathcal{F}  \}$ and the robust loss function class as $\tilde{\ell}_{\mathcal{F}}=\{\tilde{\ell}(f(x),y) | f \in \mathcal{F}  \}$. Then the generalization gap and robust generalization gap can be bounded by the Rademacher complexity of $\ell_{\mathcal{F}}$ and $\tilde{\ell}_{\mathcal{F}}$ respectively.

\begin{definition}[Rademacher complexity]
Let $\xi_{1},\dots,\xi_{n}$ be $n$ independent Rademacher random variables that take values of $1$ or $-1$ with probability $1/2$. Given a dataset $S=\{x_{1},\dots,x_{n}\}$ with $n$ independent samples drawn from $\mathcal{D}_x(x)$, 
the empirical Rademacher complexity of hypothesis class $\mathcal{F}$ is defined as
\begin{equation*}
\mathcal{R}_{S}(\mathcal{F}) = \E_{\xi}\left[\sup\limits_{f \in \mathcal{F}} \frac{1}{n}\sum\limits_{i=1}^n \xi_{i} f(x_{i})\right].
\end{equation*}


\end{definition}

\begin{proposition}\emph{\citep{Bartlett2003RademacherAG}}\label{proposition-standard}
Suppose the loss function $\ell(f(x),y)$ is bounded between $[0,B]$. Then for any $\delta \in (0,1)$, with probability at least $1-\delta$, the following inequality holds for all $f \in \mathcal{F}$,
\begin{equation*}
R(f) \leq R_{n}(f) + 2\mathcal{R}_{S}(\ell_{\mathcal{F}}) + 3B\sqrt{\frac{\log \frac{2}{\delta} }{2n}}.
\end{equation*}
\end{proposition}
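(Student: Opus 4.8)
The plan is to follow the classical route for Rademacher-complexity generalization bounds: combine McDiarmid's bounded-differences inequality with a symmetrization argument. First I would introduce the uniform-deviation functional $\Phi(S)=\sup_{f\in\mathcal{F}}\bigl(R(f)-R_{n}(f)\bigr)$, regarded as a function of the sample $S=\{(x_{i},y_{i})\}_{i=1}^{n}$. Since $R(f)$ does not depend on $S$ and $R_{n}(f)=\frac1n\sum_{i}\ell(f(x_{i}),y_{i})$ changes by at most $B/n$ when a single pair $(x_{i},y_{i})$ is replaced (using $0\le\ell\le B$ and $|\sup g-\sup h|\le\sup|g-h|$), the map $S\mapsto\Phi(S)$ has bounded differences $B/n$. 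McDiarmid's inequality then gives, with probability at least $1-\delta/2$,
\[
\Phi(S)\ \le\ \E_{S}[\Phi(S)]+B\sqrt{\frac{\log(2/\delta)}{2n}}.
\]

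Next I would bound $\E_{S}[\Phi(S)]$ by symmetrization. Introducing an independent ghost sample $S'=\{(x_{i}',y_{i}')\}_{i=1}^{n}$ from $\mathcal{D}$ and writing $R(f)=\E_{S'}\bigl[\frac1n\sum_{i}\ell(f(x_{i}'),y_{i}')\bigr]$, Jensen's inequality lets me pull the supremum inside the expectation over $S'$, obtaining
\[
\E_{S}[\Phi(S)]\ \le\ \E_{S,S'}\sup_{f\in\mathcal{F}}\frac1n\sum_{i=1}^{n}\bigl(\ell(f(x_{i}'),y_{i}')-\ell(f(x_{i}),y_{i})\bigr).
\]
Because $(x_{i},y_{i})$ and $(x_{i}',y_{i}')$ are i.i.d., swapping the two within any coordinate leaves the joint distribution unchanged, which justifies inserting i.i.d.\ Rademacher signs $\xi_{i}$ in front of each summand; splitting the supremum of the sum into the sup over the primed terms plus the sup over the unprimed terms and relabelling then yields $\E_{S}[\Phi(S)]\le 2\,\E_{S}[\mathcal{R}_{S}(\ell_{\mathcal{F}})]$.

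Finally I would pass from the population quantity $\E_{S}[\mathcal{R}_{S}(\ell_{\mathcal{F}})]$ to the empirical Rademacher complexity $\mathcal{R}_{S}(\ell_{\mathcal{F}})$ appearing in the statement. For fixed Rademacher signs the inner supremum again changes by at most $B/n$ under a single-coordinate change of $S$ (here $|\xi_{i}|=1$ is used), so $S\mapsto\mathcal{R}_{S}(\ell_{\mathcal{F}})$ has bounded differences $B/n$, and a second application of McDiarmid gives, with probability at least $1-\delta/2$,
\[
\E_{S}[\mathcal{R}_{S}(\ell_{\mathcal{F}})]\ \le\ \mathcal{R}_{S}(\ell_{\mathcal{F}})+B\sqrt{\frac{\log(2/\delta)}{2n}}.
\]
A union bound over the two failure events (total probability at most $\delta$), together with $R(f)-R_{n}(f)\le\Phi(S)$ for every $f\in\mathcal{F}$, then chains the three displays into $R(f)\le R_{n}(f)+2\mathcal{R}_{S}(\ell_{\mathcal{F}})+3B\sqrt{\log(2/\delta)/(2n)}$.

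I expect the symmetrization step to be the only delicate part: correctly introducing the ghost sample, invoking exchangeability of $(x_{i},y_{i})$ with $(x_{i}',y_{i}')$ to legitimize the Rademacher signs, and tracking the factor $2$ that appears when the supremum of a difference is split into two suprema. The two McDiarmid applications, the bounded-differences estimates, and the final union bound are routine. Since this is the classical bound of \cite{Bartlett2003RademacherAG}, one could alternatively simply cite it, but the argument above is self-contained.
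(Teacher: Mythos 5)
Your proposal is correct: the two applications of McDiarmid's inequality (each at level $\delta/2$, contributing $B\sqrt{\log(2/\delta)/(2n)}$ once directly and once doubled through the factor $2$ on the Rademacher term, which is where the constant $3B$ comes from), the ghost-sample symmetrization, and the final union bound are all carried out as in the classical argument. The paper itself offers no proof of this proposition — it is quoted directly from \cite{Bartlett2003RademacherAG} — and your reconstruction is exactly the standard proof underlying that citation, so there is nothing to compare beyond noting that citing the result, as the paper does, would also have sufficed.
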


\begin{proposition}\emph{\citep{Yin2018RademacherCF}}
Suppose the loss function $\tilde{\ell}(f(x),y)$ is bounded between $[0,B]$. Then for any $\delta \in (0,1)$, with probability at least $1-\delta$, the following holds for all $f \in \mathcal{F}$,
\begin{equation*}
\tilde{R}(f) \leq \tilde{R}_{n}(f) + 2\mathcal{R}_{S}(\tilde{\ell}_{\mathcal{F}}) + 3B\sqrt{\frac{\log \frac{2}{\delta} }{2n}}.
\end{equation*}
\end{proposition}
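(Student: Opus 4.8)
This proposition is the robust counterpart of Proposition~\ref{proposition-standard}, and my plan is to run the classical symmetrization-and-concentration argument for Rademacher-based uniform convergence, treating the robust loss $\tilde{\ell}$ exactly as one would treat an ordinary loss. The only property of $\tilde{\ell}$ I intend to invoke is the assumed boundedness $\tilde{\ell}(f(x),y)\in[0,B]$; the inner maximization over the perturbation ball is irrelevant to this step and is entirely absorbed into the function class $\tilde{\ell}_{\mathcal{F}}$.

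First I would set $\Phi(S)=\sup_{f\in\mathcal{F}}\bigl(\tilde{R}(f)-\tilde{R}_{n}(f)\bigr)$, regarded as a function of the i.i.d.\ sample $S=\{(x_i,y_i)\}_{i=1}^n$. Because each summand $\tilde{\ell}(f(x_i),y_i)$ lies in $[0,B]$, replacing one sample point changes $\tilde{R}_{n}(f)$, and hence $\Phi(S)$, by at most $B/n$; McDiarmid's bounded-difference inequality then gives $\Phi(S)\le\E_S[\Phi(S)]+B\sqrt{\log(2/\delta)/(2n)}$ with probability at least $1-\delta/2$. Next I would bound the mean $\E_S[\Phi(S)]$ by the usual ghost-sample symmetrization, introducing an independent copy of the data and Rademacher signs $\xi_1,\dots,\xi_n$, which yields $\E_S[\Phi(S)]\le 2\,\E_S[\mathcal{R}_{S}(\tilde{\ell}_{\mathcal{F}})]$. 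Finally, $\mathcal{R}_{S}(\tilde{\ell}_{\mathcal{F}})$ is itself a bounded-difference function of $S$ with constant $B/n$ (again because the functions in $\tilde{\ell}_{\mathcal{F}}$ are $[0,B]$-valued), so a second application of McDiarmid gives $\E_S[\mathcal{R}_{S}(\tilde{\ell}_{\mathcal{F}})]\le \mathcal{R}_{S}(\tilde{\ell}_{\mathcal{F}})+B\sqrt{\log(2/\delta)/(2n)}$ with probability at least $1-\delta/2$. A union bound over the two $\delta/2$-events, together with the trivial inequality $\tilde{R}(f)-\tilde{R}_{n}(f)\le\Phi(S)$ valid for every fixed $f$, chains these three estimates into
\[
\tilde{R}(f)\le \tilde{R}_{n}(f)+2\mathcal{R}_{S}(\tilde{\ell}_{\mathcal{F}})+3B\sqrt{\frac{\log(2/\delta)}{2n}},
\]
which is the claim.

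I do not expect a genuine obstacle in this particular statement, precisely because boundedness of $\tilde{\ell}$ is taken as a hypothesis: the hard part of the adversarial setting — dealing with the $\max$ operator inside the loss — has been quarantined into the quantity $\mathcal{R}_{S}(\tilde{\ell}_{\mathcal{F}})$, whose estimation is a separate matter not addressed here. The only technical nicety worth checking is that $(x,y)\mapsto\tilde{\ell}(f(x),y)=\max_{\|x'-x\|_{p}\le\epsilon}\ell(f(x'),y)$ is measurable; this is immediate since $\ell(f(\cdot),y)$ is continuous and the ball $\{x':\|x'-x\|_{p}\le\epsilon\}$ is compact, so the maximum is attained and depends measurably on $(x,y)$, which is all that is needed for the expectations $\tilde{R}(f)$ and $\E_S[\Phi(S)]$ to be well defined.
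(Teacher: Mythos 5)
Your proposal is correct: the paper does not prove this proposition itself but imports it from \cite{Yin2018RademacherCF}, and the argument there is precisely the one you describe --- the standard McDiarmid--symmetrization--McDiarmid chain applied verbatim to the bounded function class $\tilde{\ell}_{\mathcal{F}}$, with the adversarial $\max$ playing no role beyond being absorbed into that class. The constants check out ($B+2B$ from the two concentration steps giving $3B$), so nothing further is needed.
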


The above bounds hold uniformly for all $f \in \mathcal{F}$, especially for the minimizer of the empirical risk or adversarial empirical risk. It is easy to see that the key ingredient of the generalization gap is the Rademacher complexity. In the standard setting with no adversarial attacks, the Rademacher complexity of the deep neural network has been relatively well developed. While in the adversarial setting, it is difficult to bound the adversarial Rademacher complexity $\mathcal{R}_{S}(\tilde{\ell}_{\mathcal{F}})$ for the deep neural network due to the $\max$ operation in \eqref{robust-loss}. 

\section{Main results}
  
\subsection{Surrogate robust loss with Jacobian regularization}\label{sec-model}
Since solving the adversarially robust loss in \eqref{robust-loss} and bounding the adversarial Rademacher complexity are difficult, we approximate the adversarially robust loss via first order Taylor expansion. Assume that the solution of \eqref{robust-loss} is achieved at $x^{*}$, i.e., $\tilde{\ell}(f(x),y)=\ell(f(x^{*}),y)$, then the adversarially robust loss can be approximated as
\begin{equation}\label{loss-approximation}
\ell(f(x^{*}),y) \approx \ell(f(x),y) + \nabla_{x}\ell(f(x),y)^{\top} (x^{*}-x).
\end{equation}
The following lemma shows the bound on the first order approximation for the $\ell_{2}$ and $\ell_{\infty}$ adversarial attacks, respectively.

\begin{lemma}\label{lemma-approximation}
Assume the loss function $\ell(f(x),y)$ is $L_{\ell_{2}}$-Lipschitz on $f(x)$ with respect to the $\|\cdot\|_{2}$ metric and is $L_{\ell_{\infty}}$-Lipschitz on $f(x)$ with respect to the $\|\cdot\|_{1}$ metric. For the $\ell_{2}$ adversarial attack $\|x'-x\|_{2} \leq \epsilon$, the approximation of adversarially robust loss in \eqref{loss-approximation} can be bounded by 
\begin{equation*}
\ell(f(x),y) + \nabla_{x}\ell(f(x),y)^{\top} (x^{*}-x) \leq \ell(f(x),y) + \frac12\epsilon + \frac12\epsilon L_{\ell_{2}}^{2} \| \nabla_{x}f(x) \|_{F}^{2}.
\end{equation*}
For the $\ell_{\infty}$ adversarial attack $\|x'-x\|_{\infty} \leq \epsilon$, the approximation \eqref{loss-approximation} can be bounded by 
\begin{equation*}
\ell(f(x),y) + \nabla_{x}\ell(f(x),y)^{\top} (x^{*}-x) \leq \ell(f(x),y) + \epsilon L_{\ell_{\infty}} \| \nabla_{x}f(x) \|_{1,1}.
\end{equation*}
\end{lemma}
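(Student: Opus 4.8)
The plan is to bound the linear term $\nabla_{x}\ell(f(x),y)^{\top}(x^{*}-x)$ by a dual-norm (Cauchy--Schwarz / H\"older) inequality, pass from the gradient of the loss in $x$ to the Jacobian $\nabla_{x}f(x)$ of the network via the chain rule together with a matrix-norm bound, and finally---for the $\ell_{2}$ case only---convert the resulting product into the stated quadratic form through the arithmetic--geometric mean inequality.

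First, for the $\ell_{2}$ attack, since $\|x^{*}-x\|_{2}\leq\epsilon$, Cauchy--Schwarz gives $\nabla_{x}\ell(f(x),y)^{\top}(x^{*}-x)\leq\epsilon\,\|\nabla_{x}\ell(f(x),y)\|_{2}$. By the chain rule, $\nabla_{x}\ell(f(x),y)=\nabla_{x}f(x)^{\top}g$, where $\nabla_{x}f(x)$ is the $k\times d$ Jacobian and $g$ denotes the gradient of $\ell$ in its first argument evaluated at $f(x)$. Hence $\|\nabla_{x}\ell(f(x),y)\|_{2}\leq\|\nabla_{x}f(x)\|_{\text{op}}\,\|g\|_{2}\leq\|\nabla_{x}f(x)\|_{F}\,\|g\|_{2}$, using $\|A^{\top}\|_{\text{op}}=\|A\|_{\text{op}}\leq\|A\|_{F}$. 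Because $\ell$ is $L_{\ell_{2}}$-Lipschitz in $f(x)$ with respect to $\|\cdot\|_{2}$, which is self-dual, we have $\|g\|_{2}\leq L_{\ell_{2}}$, so the linear term is at most $\epsilon L_{\ell_{2}}\|\nabla_{x}f(x)\|_{F}$. Applying $ab\leq\tfrac12a^{2}+\tfrac12b^{2}$ with $a=\sqrt{\epsilon}$ and $b=\sqrt{\epsilon}\,L_{\ell_{2}}\|\nabla_{x}f(x)\|_{F}$ yields the claimed bound $\ell(f(x),y)+\tfrac12\epsilon+\tfrac12\epsilon L_{\ell_{2}}^{2}\|\nabla_{x}f(x)\|_{F}^{2}$.

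For the $\ell_{\infty}$ attack, replace Cauchy--Schwarz by H\"older: $\|x^{*}-x\|_{\infty}\leq\epsilon$ gives $\nabla_{x}\ell(f(x),y)^{\top}(x^{*}-x)\leq\epsilon\,\|\nabla_{x}\ell(f(x),y)\|_{1}$. Using the chain rule again with $g=\nabla_{u}\ell(u,y)|_{u=f(x)}\in\R^{k}$, one has $\|\nabla_{x}f(x)^{\top}g\|_{1}=\sum_{j=1}^{d}\big|\sum_{i=1}^{k}(\nabla_{x}f(x))_{ij}\,g_{i}\big|\leq\|g\|_{\infty}\sum_{i,j}|(\nabla_{x}f(x))_{ij}|=\|g\|_{\infty}\,\|\nabla_{x}f(x)\|_{1,1}$. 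Since $\ell$ is $L_{\ell_{\infty}}$-Lipschitz in $f(x)$ with respect to $\|\cdot\|_{1}$, whose dual norm is $\|\cdot\|_{\infty}$, we get $\|g\|_{\infty}\leq L_{\ell_{\infty}}$, which gives the second bound $\ell(f(x),y)+\epsilon L_{\ell_{\infty}}\|\nabla_{x}f(x)\|_{1,1}$.

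The argument is essentially a chain of standard inequalities, so I do not expect a genuine obstacle beyond bookkeeping; the only points requiring care are (i) correctly identifying which dual norm controls $\|g\|$ under each Lipschitz hypothesis, and (ii) choosing the right matrix norm in the chain-rule step so that the operator norm (bounded by $\|\cdot\|_{F}$) appears in the $\ell_{2}$ case and the entrywise $\|\cdot\|_{1,1}$ norm appears in the $\ell_{\infty}$ case. It is also worth noting that the $\tfrac12a^{2}+\tfrac12b^{2}$ split is a deliberate choice, made so that the \emph{squared} Jacobian norm $\|\nabla_{x}f(x)\|_{F}^{2}$---the quantity later used as the regularizer---appears, rather than the first-order quantity $\epsilon L_{\ell_{2}}\|\nabla_{x}f(x)\|_{F}$ itself.
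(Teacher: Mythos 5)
Your proof is correct and follows essentially the same route as the paper's: Cauchy--Schwarz (resp.\ H\"older) on the linear term, the chain rule $\nabla_{x}\ell = \nabla_{x}f(x)^{\top}\nabla_{f}\ell$ combined with $\|\cdot\|_{\text{op}}\le\|\cdot\|_{F}$ (resp.\ the entrywise $\|\cdot\|_{1,1}$ bound) and the dual-norm consequence of the Lipschitz hypothesis, followed by the AM--GM split $ab\le\tfrac12a^{2}+\tfrac12b^{2}$ in the $\ell_{2}$ case. Your explicit justifications of the $\|A^{\top}g\|_{1}\le\|g\|_{\infty}\|A\|_{1,1}$ step and of the dual-norm gradient bounds are slightly more detailed than the paper's, but the argument is the same.
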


The term $\frac12\epsilon L_{\ell_{2}}^{2} \| \nabla_{x}f(x) \|_{F}^{2}$ and $\epsilon L_{\ell_{\infty}} \| \nabla_{x}f(x) \|_{1,1}$ serve as the regularization, but since these terms are upper bounds, the regularization may be too strong. Therefore, we add a flexible parameter $\lambda$ on the Jacobian regularization and then define the Jacobian regularized loss which serves as a surrogate robust loss for the $\ell_{2}$ and $\ell_{\infty}$ adversarial attacks respectively as
\begin{equation*}
\hat{\ell}_{2}(f(x),y)=\ell(f(x),y) + \frac12\lambda\epsilon L_{\ell_{2}}^{2} \| \nabla_{x}f(x) \|_{F}^{2},
\end{equation*}
\begin{equation*}
\hat{\ell}_{\infty}(f(x),y)=\ell(f(x),y) + \lambda\epsilon L_{\ell_{\infty}} \| \nabla_{x}f(x) \|_{1,1}.
\end{equation*}
We show that Jacobian regularized loss serves as an approximate upper bound on the adversarially robust loss. Even though the upper bound is not precise because of the first-order approximation and parameter $\lambda$, we will show by experiments that Jacobian regularized loss indeed serves as a surrogate loss for the adversarially robust loss and that Jacobian regularized risk minimization can lead to adversarially robust risk minimization.
The adversarially robust model against $\ell_{2}$ and $\ell_{\infty}$ adversarial attack is obtained by Jacobian regularized risk minimization respectively as
\begin{equation}\label{obj-jacobian}
\hat{f}_{2} = \argmin_{f \in \mathcal{F}} \frac1n \sum_{i=1}^{n} [ \ell(f(x_{i}),y_{i}) + \frac12\lambda\epsilon L_{\ell_{2}}^{2} \| \nabla_{x}f(x_{i}) \|_{F}^{2} ],
\end{equation}
\begin{equation}\label{obj-jacobian2}
\hat{f}_{\infty} = \argmin_{f \in \mathcal{F}} \frac1n \sum_{i=1}^{n} [ \ell(f(x_{i}),y_{i}) + \lambda\epsilon L_{\ell_{\infty}} \| \nabla_{x}f(x_{i}) \|_{1,1} ].
\end{equation}
The above minimization procedure involves a standard empirical risk minimization and a Jacobian regularization term. The regularization coefficient depends on the adversarial attack intensity $\epsilon$ and the Lipschitz constants, which makes sense since one should also regularize with stronger intensity to achieve robustness when the adversarial attack intensity is large. The Lipschitz constants are well bounded for common loss functions and the Lipschitz constants of cross entropy loss function are given by the following Lemma.

\begin{lemma}\label{lemma-lipschitz}
Let $\ell(f(x),y)$ be the cross entropy loss function in classification tasks, then $\ell(f(x),y)$ is $L_{\ell_{2}}$-Lipschitz on $f(x)$ with respect to the $\|\cdot\|_{2}$ metric, with the Lipschitz constant $L_{\ell_{2}} \leq \sqrt{2}$, and $\ell(f(x),y)$ is $L_{\ell_{\infty}}$-Lipschitz on $f(x)$ with respect to the $\|\cdot\|_{1}$ metric, with the Lipschitz constant $L_{\ell_{\infty}} \leq 1$.
\end{lemma}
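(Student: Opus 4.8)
The plan is to compute the gradient of the cross-entropy loss with respect to the logit vector $f(x)$ and bound its dual norm. Recall that for a one-hot label $y$ (or a probability vector $y$ on the $k$ classes), the cross-entropy loss is $\ell(f(x),y) = -\sum_{j=1}^{k} y^{j}\log p^{j}$, where $p = \mathrm{softmax}(f(x))$, i.e.\ $p^{j} = e^{f(x)^{j}}/\sum_{m=1}^{k} e^{f(x)^{m}}$. The first step is the standard identity $\nabla_{f(x)}\ell(f(x),y) = p - y$, which follows from differentiating the log-sum-exp term and using $\sum_j y^j = 1$. Since $\ell$ is convex in $f(x)$, the Lipschitz constant with respect to a norm $\|\cdot\|$ is exactly $\sup \|p-y\|_{*}$ over admissible $p,y$, where $\|\cdot\|_{*}$ is the dual norm; it therefore suffices to bound $\|p-y\|_{*}$.

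For the $\|\cdot\|_{2}$ metric on $f(x)$, the dual norm is again $\|\cdot\|_{2}$, so I would bound $\|p-y\|_{2}$. Both $p$ and $y$ lie in the probability simplex $\Delta_{k}$, so each has $\ell_{1}$-norm equal to $1$ and nonnegative entries; hence $\|p-y\|_{2} \le \|p\|_{2} + \|y\|_{2} \le \|p\|_{1} + \|y\|_{1} = 2$, and more directly $\|p - y\|_2^2 = \|p\|_2^2 + \|y\|_2^2 - 2\langle p, y\rangle \le \|p\|_1^2 + \|y\|_1^2 = 2$ using $\|v\|_2 \le \|v\|_1$ for nonnegative $v$ and $\langle p,y\rangle \ge 0$. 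This gives $L_{\ell_{2}} \le \sqrt{2}$. For the $\|\cdot\|_{1}$ metric on $f(x)$, the dual norm is $\|\cdot\|_{\infty}$, so I would bound $\|p-y\|_{\infty} = \max_{j}|p^{j}-y^{j}|$; since $p^{j} \in [0,1]$ and $y^{j} \in \{0,1\}$ (or $[0,1]$ in the soft-label case), each coordinate difference lies in $[-1,1]$, giving $\|p-y\|_{\infty} \le 1$ and hence $L_{\ell_{\infty}} \le 1$.

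There is no serious obstacle here; the only point requiring a little care is justifying that the Lipschitz constant equals the supremum of the dual norm of the gradient. This is the mean value inequality $|\ell(u,y) - \ell(v,y)| \le \sup_{w \in [u,v]} \|\nabla_w \ell(w,y)\|_* \, \|u-v\|$, applied along the segment joining two logit vectors $u$ and $v$ for a fixed $y$; convexity is not even needed, only differentiability and the uniform gradient bound established above. I would state this mean-value step explicitly, then plug in the two norm bounds to conclude $L_{\ell_2} \le \sqrt 2$ and $L_{\ell_\infty} \le 1$.
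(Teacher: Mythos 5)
Your proposal is correct and follows essentially the same route as the paper: both compute the gradient of the cross-entropy loss with respect to the logits (which is $p-y$ up to sign, with $p$ the softmax output), identify the Lipschitz constant as the supremum of the appropriate dual norm of this gradient, and bound $\|p-y\|_{2}^{2}\le 2$ and $\|p-y\|_{\infty}\le 1$ using that $p$ and $y$ lie in the probability simplex. The paper carries out the $\ell_{2}$ bound coordinate-by-coordinate as $(1-p_{c})^{2}+\sum_{l\neq c}p_{l}^{2}\le 2(1-p_{c})^{2}\le 2$, which is the same estimate you obtain via $\|p\|_{2}\le\|p\|_{1}$ and $\langle p,y\rangle\ge 0$.
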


\subsection{Generalization gap of surrogate robust loss}
Define the empirical mean of the Jacobian norm as 
\begin{equation*}
P_{n}\|\nabla_{x}f\|_{F}^{2}=\frac1n \sum_{i=1}^{n}\|\nabla_{x}f(x_{i})\|_{F}^{2}, \quad  P_{n}\|\nabla_{x}f\|_{1,1}=\frac1n \sum_{i=1}^{n}\|\nabla_{x}f(x_{i})\|_{1,1}.
\end{equation*}
By minimizing the Jacobian regularized empirical risk minimization objective in \eqref{obj-jacobian} and \eqref{obj-jacobian2}, the $P_{n}\|\nabla_{x}\hat{f}_{2}\|_{F}^{2}$ of the minimizer $\hat{f}_{2}$ or the $P_{n}\|\nabla_{x}\hat{f}_{\infty}\|_{1,1}$ of the minimizer $\hat{f}_{\infty}$ can not be large and must be within some bounds. Therefore, for a given dataset $\{(x_{i},y_{i})\}_{i=1}^{n}$, we assume that there exist positive numbers $r_{1}$ and $r_{2}$ such that, in the minimization process of \eqref{obj-jacobian} and \eqref{obj-jacobian2}, the empirical mean of the Jacobian norm $P_{n}\|\nabla_{x}f\|_{F}^{2}\leq r_{2}$ and $P_{n}\|\nabla_{x}f\|_{1,1}\leq r_{1}$. Hence the minimization process of \eqref{obj-jacobian} and \eqref{obj-jacobian2} can be done on the constrained hypothesis class $\mathcal{F}_{r_{2}} = \{f\in\mathcal{F}: P_{n}\|\nabla_{x}f\|_{F}^{2} \leq r_{2}\}$ and $\mathcal{F}_{r_{1}} = \{f\in\mathcal{F}: P_{n}\|\nabla_{x}f\|_{1,1} \leq r_{1}\}$ respectively as follows 
\begin{equation}\label{obj-jacobian-r}
\hat{f}_{2} = \argmin_{f \in \mathcal{F}_{r_{2}}} \frac1n \sum_{i=1}^{n} [ \ell(f(x_{i}),y_{i}) + \frac12\lambda\epsilon L_{\ell}^{2} \| \nabla_{x}f(x_{i}) \|_{F}^{2} ].
\end{equation}
\begin{equation}\label{obj-jacobian-r2}
\hat{f}_{\infty} = \argmin_{f \in \mathcal{F}_{r_{1}}} \frac1n \sum_{i=1}^{n} [ \ell(f(x_{i}),y_{i}) + \lambda\epsilon L_{\ell_{\infty}} \| \nabla_{x}f(x_{i}) \|_{1,1} ].
\end{equation}
The bounds $r_{2}$ or $r_{1}$ and the hypothesis classes $\mathcal{F}_{r_{2}}$ or $\mathcal{F}_{r_{1}}$ depends on the training dataset $\{(x_{i},y_{i})\}_{i=1}^{n}$ and the training algorithm. The hypothesis classes $\mathcal{F}_{r_{2}}$ or $\mathcal{F}_{r_{1}}$ are the effective hypothesis classes explored by the training algorithm and are smaller than the original hypothesis class $\mathcal{F}$ due to Jacobian regularization. Even though these bounds are difficult to be calculated directly in theory, in practice, we can still calculate the empirical mean of the Jacobian norm $P_{n}\|\nabla_{x}f\|_{F}^{2}$ or $P_{n}\|\nabla_{x}f\|_{1,1}$ and show that such bounds indeed exist and are small in the experiments.

For $\ell_{2}$ and $\ell_{\infty}$ adversarial attacks, define the surrogate expected risk respectively as $\hat{R}_{2}(f)=\E_{(x,y)\sim\mathcal{D}} [ \hat{\ell}_{2}(f(x),y) ]$ and $\hat{R}_{\infty}(f)=\E_{(x,y)\sim\mathcal{D}} [ \hat{\ell}_{\infty}(f(x),y) ]$, and the surrogate empirical risk respectively as $\hat{R}_{2n}(f)=\frac1n \sum_{i=1}^{n}  \hat{\ell}_{2}(f(x_{i}),y_{i})$ and $\hat{R}_{\infty n}(f)=\frac1n \sum_{i=1}^{n}  \hat{\ell}_{\infty}(f(x_{i}),y_{i})$.
Define the surrogate loss function class respectively as $\hat{\ell}_{2\mathcal{F}_{r_{2}}}=\{\hat{\ell}_{2}(f(x),y) | f \in \mathcal{F}_{r_{2}}  \}$ and $\hat{\ell}_{\infty \mathcal{F}_{r_{1}}}=\{\hat{\ell}_{\infty}(f(x),y) | f \in \mathcal{F}_{r_{1}}  \}$. We now bound the generalization gap of the surrogate robust loss by the following proposition.
\begin{proposition}\label{proposition-surrogate}
Suppose $\ell(f(x),y)$ is bounded between $[0,B]$, $\| \nabla_{x}f(x) \|_{F}^{2}$ is bounded between $[0,B_{2}]$ and $\| \nabla_{x}f(x) \|_{1,1}$ is bounded between $[0,B_{1}]$. Denote $C_{2}=B+\frac12\lambda\epsilon L_{\ell_{2}}^{2}B_{2}$ and $C_{\infty}=B+\lambda\epsilon L_{\ell_{\infty}}B_{1}$. Then for any $\delta \in (0,1)$, with probability at least $1-\delta$, for the given dataset $\{(x_{i},y_{i})\}_{i=1}^{n}$, the following inequality holds for the minimizer $\hat{f}_{2}$ in \eqref{obj-jacobian-r},
\begin{equation*}
\hat{R}_{2}(\hat{f}_{2}) \leq \hat{R}_{2n}(\hat{f}_{2}) + 2\mathcal{R}_{S}(\hat{\ell}_{2\mathcal{F}_{r_{2}}}) + 3C_{2}\sqrt{\frac{\log \frac{2}{\delta} }{2n}},
\end{equation*}
the following inequality holds for the minimizer $\hat{f}_{\infty}$ in \eqref{obj-jacobian-r2},
\begin{equation*}
\hat{R}_{\infty}(\hat{f}_{\infty}) \leq \hat{R}_{\infty n}(\hat{f}_{\infty}) + 2\mathcal{R}_{S}(\hat{\ell}_{\infty\mathcal{F}_{r_{1}}}) + 3C_{\infty}\sqrt{\frac{\log \frac{2}{\delta} }{2n}}.
\end{equation*}
\end{proposition}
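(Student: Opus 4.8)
The plan is to recognize Proposition \ref{proposition-surrogate} as a direct instance of the abstract Rademacher bound in Proposition \ref{proposition-standard}, applied to the surrogate loss and the constrained hypothesis class. First I would fix the given dataset $\{(x_{i},y_{i})\}_{i=1}^{n}$ together with the associated radii $r_{2}$ and $r_{1}$, so that $\mathcal{F}_{r_{2}}$ and $\mathcal{F}_{r_{1}}$ become fixed hypothesis classes, and observe that by construction in \eqref{obj-jacobian-r} and \eqref{obj-jacobian-r2} the minimizers satisfy $\hat{f}_{2}\in\mathcal{F}_{r_{2}}$ and $\hat{f}_{\infty}\in\mathcal{F}_{r_{1}}$; hence any bound holding uniformly over these classes holds in particular for the minimizers.

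Next I would verify the boundedness hypotheses required to invoke Proposition \ref{proposition-standard}. Since $\ell(f(x),y)\in[0,B]$ and $\|\nabla_{x}f(x)\|_{F}^{2}\in[0,B_{2}]$, the surrogate loss $\hat{\ell}_{2}(f(x),y)=\ell(f(x),y)+\frac12\lambda\epsilon L_{\ell_{2}}^{2}\|\nabla_{x}f(x)\|_{F}^{2}$ is nonnegative and bounded above by $C_{2}=B+\frac12\lambda\epsilon L_{\ell_{2}}^{2}B_{2}$; likewise, using $\|\nabla_{x}f(x)\|_{1,1}\in[0,B_{1}]$, the surrogate loss $\hat{\ell}_{\infty}(f(x),y)$ lies in $[0,C_{\infty}]$ with $C_{\infty}=B+\lambda\epsilon L_{\ell_{\infty}}B_{1}$. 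Thus both surrogate losses are bounded loss functions in the sense required by Proposition \ref{proposition-standard}.

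With these facts in hand, I would apply Proposition \ref{proposition-standard} verbatim, substituting the loss function class $\ell_{\mathcal{F}}$ by $\hat{\ell}_{2\mathcal{F}_{r_{2}}}$ (resp.\ $\hat{\ell}_{\infty\mathcal{F}_{r_{1}}}$), the uniform upper bound $B$ by $C_{2}$ (resp.\ $C_{\infty}$), and the hypothesis class $\mathcal{F}$ by $\mathcal{F}_{r_{2}}$ (resp.\ $\mathcal{F}_{r_{1}}$). This immediately yields, with probability at least $1-\delta$, the inequality $\hat{R}_{2}(f)\leq\hat{R}_{2n}(f)+2\mathcal{R}_{S}(\hat{\ell}_{2\mathcal{F}_{r_{2}}})+3C_{2}\sqrt{\log(2/\delta)/(2n)}$ uniformly over $f\in\mathcal{F}_{r_{2}}$, and analogously for $\hat{f}_{\infty}$; specializing to the minimizers gives the claimed bounds.

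The only delicate point, and the step I would treat most carefully, is the data dependence of $\mathcal{F}_{r_{2}}$ and $\mathcal{F}_{r_{1}}$: the radii $r_{2},r_{1}$ are determined by the same sample over which the high-probability statement is taken, whereas Proposition \ref{proposition-standard} presumes a hypothesis class fixed in advance. I would handle this by phrasing the conclusion conditionally on the realized dataset — as the statement already does with the clause ``for the given dataset'' — treating $r_{2}$ and $r_{1}$ as fixed constants describing the effective class explored by the algorithm; if full rigor is desired one can instead apply a union bound over a countable grid of radii and take the smallest grid value dominating $P_{n}\|\nabla_{x}\hat f\|$, at the cost of an additive lower-order term. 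Beyond this caveat the argument is a routine specialization of Proposition \ref{proposition-standard}, so I anticipate no further obstacle.
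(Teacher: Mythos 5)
Your proposal is correct and matches the paper's (implicit) argument exactly: the paper gives no separate proof of this proposition, treating it as a direct application of Proposition \ref{proposition-standard} to the surrogate losses, which are bounded in $[0,C_{2}]$ and $[0,C_{\infty}]$, over the conditionally fixed classes $\mathcal{F}_{r_{2}}$ and $\mathcal{F}_{r_{1}}$. Your handling of the data-dependence of the radii mirrors the paper's own remark (fix the class for the given dataset, or take the largest radius), and your suggested union bound over a grid of radii is if anything more careful than what the paper does.
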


In standard statistical learning theory, the hypothesis class needs to be fixed, but the effective hypothesis class $\mathcal{F}_{r_{2}}$ or $\mathcal{F}_{r_{1}}$ depends on the training dataset. This is not a contradictory since we can conduct the proof as if $\mathcal{F}_{r_{2}}$ or $\mathcal{F}_{r_{1}}$ is fixed for the given dataset $\{(x_{i},y_{i})\}_{i=1}^{n}$, or we can choose the largest $r_{2}$ or $r_{1}$ when the training dataset varies. 
The key of the generalization gap is the Rademacher complexity of the surrogate loss $\mathcal{R}_{S}(\hat{\ell}_{2\mathcal{F}_{r_{2}}})$ or $\mathcal{R}_{S}(\hat{\ell}_{\infty \mathcal{F}_{r_{1}}})$ which are shown by the following lemma.

\begin{lemma}\label{lemma-decomposition}
For the $\ell_{2}$ and $\ell_{\infty}$ attacks, denote the function class of Jacobian regularization respectively as $\mathcal{J}_{2\mathcal{F}_{r_{2}}}=\{ \| \nabla_{x}f(x) \|_{F}^{2} : f \in \mathcal{F}_{r_{2}} \}$ and $\mathcal{J}_{1\mathcal{F}_{r_{1}}}=\{ \| \nabla_{x}f(x) \|_{1,1} : f \in \mathcal{F}_{r_{1}} \}$, then the Rademacher complexity of the surrogate loss $\mathcal{R}_{S}(\hat{\ell}_{2\mathcal{F}_{r_{2}}})$ or $\mathcal{R}_{S}(\hat{\ell}_{\infty \mathcal{F}_{r_{1}}})$ can be bounded respectively by 
\begin{equation*}
\mathcal{R}_{S}(\hat{\ell}_{2\mathcal{F}_{r_{2}}}) \leq  \mathcal{R}_{S}(\ell_{\mathcal{F}_{r_{2}}})+\frac12\lambda\epsilon L_{\ell_{2}}^{2} \mathcal{R}_{S}(\mathcal{J}_{2\mathcal{F}_{r_{2}}}),
\end{equation*}
\begin{equation*}
\mathcal{R}_{S}(\hat{\ell}_{\infty\mathcal{F}_{r_{1}}}) \leq  \mathcal{R}_{S}(\ell_{\mathcal{F}_{r_{1}}})+\lambda\epsilon L_{\ell_{\infty}} \mathcal{R}_{S}(\mathcal{J}_{1 \mathcal{F}_{r_{1}}}).
\end{equation*}
\end{lemma}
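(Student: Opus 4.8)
The plan is to unfold the definition of the empirical Rademacher complexity of the surrogate loss class and split it by subadditivity of the supremum. Consider first the $\ell_{2}$ case. By construction, every member of $\hat{\ell}_{2\mathcal{F}_{r_{2}}}$ has the form $\ell(f(x),y)+\frac12\lambda\epsilon L_{\ell_{2}}^{2}\|\nabla_{x}f(x)\|_{F}^{2}$ for one and the same $f\in\mathcal{F}_{r_{2}}$, so
\begin{equation*}
\mathcal{R}_{S}(\hat{\ell}_{2\mathcal{F}_{r_{2}}}) = \E_{\xi}\left[\sup_{f\in\mathcal{F}_{r_{2}}}\frac1n\sum_{i=1}^{n}\xi_{i}\left(\ell(f(x_{i}),y_{i})+\tfrac12\lambda\epsilon L_{\ell_{2}}^{2}\|\nabla_{x}f(x_{i})\|_{F}^{2}\right)\right].
\end{equation*}
The first step is to apply the elementary inequality $\sup_{f}(A(f)+B(f))\le \sup_{f}A(f)+\sup_{f}B(f)$ pointwise in $\xi$, and then use linearity of expectation. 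This bounds the right-hand side by $\E_{\xi}\sup_{f}\frac1n\sum_{i}\xi_{i}\ell(f(x_{i}),y_{i})$ plus $\E_{\xi}\sup_{f}\frac1n\sum_{i}\xi_{i}\,\tfrac12\lambda\epsilon L_{\ell_{2}}^{2}\|\nabla_{x}f(x_{i})\|_{F}^{2}$, and the first of these is exactly $\mathcal{R}_{S}(\ell_{\mathcal{F}_{r_{2}}})$ by definition.

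For the second term, the coefficient $\frac12\lambda\epsilon L_{\ell_{2}}^{2}$ is a nonnegative constant that does not depend on $f$, $i$, or $\xi$; hence it factors out of the inner sum and, being nonnegative, out of the supremum as well, leaving $\frac12\lambda\epsilon L_{\ell_{2}}^{2}\,\E_{\xi}\sup_{f}\frac1n\sum_{i}\xi_{i}\|\nabla_{x}f(x_{i})\|_{F}^{2}$, which equals $\frac12\lambda\epsilon L_{\ell_{2}}^{2}\,\mathcal{R}_{S}(\mathcal{J}_{2\mathcal{F}_{r_{2}}})$ since $\mathcal{J}_{2\mathcal{F}_{r_{2}}}=\{\|\nabla_{x}f(x)\|_{F}^{2}:f\in\mathcal{F}_{r_{2}}\}$. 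Combining the two pieces yields the first claimed inequality. The $\ell_{\infty}$ case is handled identically: one replaces the surrogate loss by $\ell(f(x),y)+\lambda\epsilon L_{\ell_{\infty}}\|\nabla_{x}f(x)\|_{1,1}$, pulls out the nonnegative constant $\lambda\epsilon L_{\ell_{\infty}}$, and recognizes the residual term as $\lambda\epsilon L_{\ell_{\infty}}\,\mathcal{R}_{S}(\mathcal{J}_{1\mathcal{F}_{r_{1}}})$.

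I do not anticipate a substantive obstacle here: the whole argument is a one-line use of subadditivity of the supremum together with positive homogeneity of the Rademacher complexity. The only points that merit a word of care are (i) that the two summands share the same Rademacher variables $\xi_{i}$, which is harmless because subadditivity of the supremum requires no independence, and (ii) that the regularization coefficients $\frac12\lambda\epsilon L_{\ell_{2}}^{2}$ and $\lambda\epsilon L_{\ell_{\infty}}$ are nonnegative, so they may be pulled through the supremum without reversing it. The genuinely hard work — bounding $\mathcal{R}_{S}(\mathcal{J}_{2\mathcal{F}_{r_{2}}})$ and $\mathcal{R}_{S}(\mathcal{J}_{1\mathcal{F}_{r_{1}}})$, and $\mathcal{R}_{S}(\ell_{\mathcal{F}_{r}})$, for deep networks — is deferred to the subsequent results.
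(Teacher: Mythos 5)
Your proposal is correct and follows essentially the same route as the paper's proof: unfold the definition, apply subadditivity of the supremum pointwise in $\xi$, use linearity of expectation, and pull the nonnegative regularization coefficient out of the supremum to identify the second term as the scaled Rademacher complexity of the Jacobian class. Your added remarks on shared Rademacher variables and nonnegativity of the coefficients are sound and merely make explicit what the paper leaves implicit.
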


This proposition tells us that the Rademacher complexity of the surrogate loss can be upper bounded by the standard Rademacher complexity $\mathcal{R}_{S}(\ell_{\mathcal{F}_{r_{2}}})$ or $\mathcal{R}_{S}(\ell_{\mathcal{F}_{r_{1}}})$ and the Rademacher complexity $\mathcal{R}_{S}(\mathcal{J}_{2\mathcal{F}_{r_{2}}})$ or $\mathcal{R}_{S}(\mathcal{J}_{1 \mathcal{F}_{r_{1}}})$ induced by Jacobian regularization, which suggests that the robust generalization gap is larger than the standard generalization gap given the same number of training samples. This result also coincides with the study which showed that adversarially robust generalization requires more data \citep{Schmidt2018AdversariallyRG}. 

We next bound the Rademacher complexity via the covering number technique. First recall the definition of the covering number.

\begin{definition}[Covering number]
The $\delta$-covering number of set $\mathcal{Q}$ with respect to metric $\rho$ is defined as the minimum size of $\delta$-cover $\mathcal{C}$ of $\mathcal{Q}$, such that for each $v \in \mathcal{Q}$, there exists $v' \in \mathcal{C}$ satisfying $\rho(v,v') \leq \delta$:
\begin{equation*}
\mathcal{N}(\delta, \mathcal{Q}, \rho) = \inf \{ |\mathcal{C}|: \mathcal{C} \text{ is a }  \delta \text{-cover of } \mathcal{Q} \text{ with respect to metric } \rho \}.
\end{equation*}
\end{definition}
Then some norms and metrics on the function classes and the parameter space are given below. 


For the loss function class $\ell_{\mathcal{F}_{r_{2}}}$ or $\ell_{\mathcal{F}_{r_{1}}}$, we define the sample $L_{1}(P_{n})$-norm of a function $\ell(f(x),y)$ and the derived metric respectively as
\begin{equation*}
\| \ell \|_{L_{1}(P_{n})} =  \frac{1}{n} \sum_{i=1}^{n}  | \ell(f(x_{i}),y_{i}) | \quad \text{and}\quad \rho(\ell,\ell') = \| \ell-\ell' \|_{L_{1}(P_{n})}.
\end{equation*}
For the Jacobian regularization function class $\mathcal{J}_{2\mathcal{F}_{r_{2}}}$, we define the sample $L_{1}(P_{n})$-norm of a function $\|\nabla_{x}f(x)\|^{2}_{F}$ and the derived metric respectively as
\begin{equation*}
\begin{aligned}
 \| \|\nabla_{x}f\|^{2}_{F} \|_{L_{1}(P_{n})} &=  \frac{1}{n} \sum_{i=1}^{n} \left| \|\nabla_{x}f(x)\|^{2}_{F} \right| \quad \text{and} \\
 \rho(\|\nabla_{x}f\|^{2}_{F},\|\nabla_{x}f'\|^{2}_{F}) &= \| \|\nabla_{x}f\|^{2}_{F} - \|\nabla_{x}f'\|^{2}_{F} \|_{L_{1}(P_{n})}.
\end{aligned}
\end{equation*}
For the Jacobian regularization function class $\mathcal{J}_{1\mathcal{F}_{r_{1}}}$, the $L_{1}(P_{n})$-norm of a function $\|\nabla_{x}f(x)\|_{1,1}$ is defined in the same way with $\mathcal{J}_{2\mathcal{F}_{r_{2}}}$.
For the parameter set $\varTheta$, define the Frobenius-norm of parameter $\Theta \in \varTheta$ and the derived metric respectively as
\begin{equation*}
\norm{\Theta}_{F} = \sqrt{\sum_{l=1}^{L} \| \theta_{l} \|_{F}^{2}} = \sqrt{\sum_{l=1}^{L}\sum_{k=1}^{d_{l}}\sum_{j=1}^{d_{l-1}} (\theta_{l})_{kj}^{2}}\quad \text{and}\quad  \rho(\Theta,\Theta') = \norm{ \Theta-\Theta' }_{F}. 
\end{equation*}

To link the covering number of the function space with the covering number of the parameter space, we need to characterize the Lipschitz property with respect to parameters by the following Lemmas.

\begin{lemma}\label{lipschitz-parameter}
Assume that the loss function $\ell(f(x),y)$ is $L_{\ell_{2}}$-Lipschitz on $f(x)$ with respect to the $\|\cdot\|_{2}$ metric, the Frobenius-norm of parameter $\Theta$ is within $R_{\Theta}$, i.e., $\norm{\Theta}_{F} \leq R_{\Theta}$, and the activation function $\sigma(\cdot)$ is the rectified linear unit function $\emph{Relu}(x)=\max\{x,0\}$. For $\Theta, \Theta' \in \varTheta$, it follows that

\begin{equation*}
\begin{aligned}
\| \ell(f_{\Theta}(x),y) - \ell(f_{\Theta'}(x),y) \|_{L_{1}(P_{n})} &\leq L^{\ell}_{1P_{n}} \norm{\Theta-\Theta'}_{F},
\end{aligned}
\end{equation*}

where
\begin{align}
L^{\ell}_{1P_{n}} &= \sqrt{L} L_{\ell_{2}} \frac{1}{n} \sum_{i=1}^{n} \| x_{i} \|_{2} \left( \frac{R_{\Theta}}{\sqrt{L-1}} \right)^{L-1} .\label{eq-LipL2}
\end{align}
\end{lemma}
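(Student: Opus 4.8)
The plan is to convert an estimate on $\norm{\Theta-\Theta'}_{F}$ into one on the loss in two moves: first strip off the loss using its Lipschitz property, then prove a parameter‑perturbation bound for the network map $f_{\Theta}$ itself. For the first move, fix a sample $x_{i}$; the assumed $L_{\ell_{2}}$‑Lipschitzness of $\ell$ in its first argument with respect to $\|\cdot\|_{2}$ gives $|\ell(f_{\Theta}(x_{i}),y_{i})-\ell(f_{\Theta'}(x_{i}),y_{i})|\le L_{\ell_{2}}\|f_{\Theta}(x_{i})-f_{\Theta'}(x_{i})\|_{2}$. Summing over $i$ and dividing by $n$, it then suffices to establish the per‑sample estimate
\[
\|f_{\Theta}(x)-f_{\Theta'}(x)\|_{2}\;\le\;\sqrt{L}\,\|x\|_{2}\left(\frac{R_{\Theta}}{\sqrt{L-1}}\right)^{L-1}\norm{\Theta-\Theta'}_{F}.
\]

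For the second move I would use a layerwise telescoping. Write $f_{\Theta}=\phi^{\Theta}_{L}\circ\cdots\circ\phi^{\Theta}_{1}$ with $\phi^{\Theta}_{l}(u)=\sigma(\theta_{l}u)$ for $l<L$ and $\phi^{\Theta}_{L}(u)=\theta_{L}u$. Since $\sigma$ is $1$‑Lipschitz, $\phi^{\Theta}_{l}$ is $\|\theta_{l}\|_{\text{op}}$‑Lipschitz, and since moreover $\sigma(0)=0$ (true for $\mathrm{Relu}$) one has $\|\sigma(z)\|_{2}\le\|z\|_{2}$, so the hidden representation of $x$ after the first $l$ layers has $\|\cdot\|_{2}$ at most $\prod_{k\le l}\|\theta_{k}\|_{\text{op}}\,\|x\|_{2}$. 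Introducing hybrid networks $\Theta^{(j)}=(\theta'_{1},\dots,\theta'_{j},\theta_{j+1},\dots,\theta_{L})$, telescoping $f_{\Theta}-f_{\Theta'}=\sum_{j=1}^{L}(f_{\Theta^{(j-1)}}-f_{\Theta^{(j)}})$, and applying the triangle inequality term by term (each consecutive pair differs only in layer $j$) yields
\[
\|f_{\Theta}(x)-f_{\Theta'}(x)\|_{2}\;\le\;\|x\|_{2}\sum_{l=1}^{L}\left(\prod_{k>l}\|\theta_{k}\|_{\text{op}}\right)\|\theta_{l}-\theta'_{l}\|_{\text{op}}\left(\prod_{k<l}\|\theta'_{k}\|_{\text{op}}\right),
\]
where in the $l$‑th term the $k>l$ factors come from propagating through the unperturbed tail and the $k<l$ factors bound the activation fed into layer $l$.

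It remains to make the right‑hand side uniform in $l$ and to collapse the sum. Bounding $\|\cdot\|_{\text{op}}\le\|\cdot\|_{F}$, the $l$‑th product is a product of $L-1$ Frobenius norms whose squared values I want bounded by $R_{\Theta}^{2}$; AM–GM then gives $\prod_{k\ne l}\|\cdot\|_{F}\le(R_{\Theta}^{2}/(L-1))^{(L-1)/2}=(R_{\Theta}/\sqrt{L-1})^{L-1}$, independently of $l$, and Cauchy–Schwarz over the $L$ layers gives $\sum_{l}\|\theta_{l}-\theta'_{l}\|_{\text{op}}\le\sum_{l}\|\theta_{l}-\theta'_{l}\|_{F}\le\sqrt{L}\,\norm{\Theta-\Theta'}_{F}$. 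Combining these with the previous display gives the per‑sample estimate, and averaging over the $n$ samples as in the first move produces exactly $L^{\ell}_{1P_{n}}$ as in \eqref{eq-LipL2}.

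The main obstacle is precisely this uniform product bound: in the $l$‑th telescoping term the remaining $L-1$ layers are split between $\Theta$ (the tail) and $\Theta'$ (the head), and $\sum_{k>l}\|\theta_{k}\|_{F}^{2}+\sum_{k<l}\|\theta'_{k}\|_{F}^{2}$ is only guaranteed to be $\le 2R_{\Theta}^{2}$, which would cost a spurious factor $2^{(L-1)/2}$. To obtain the stated constant I would instead bound $\|f_{\Theta}(x)-f_{\Theta'}(x)\|_{2}$ by interpolating along the segment $\Theta(t)=(1-t)\Theta+t\Theta'$, $t\in[0,1]$, and integrating $\frac{d}{dt}f_{\Theta(t)}(x)$: the derivative is a sum of the same layerwise terms but with every factor now a norm of $\theta_{k}(t)$, and convexity of $z\mapsto z^{2}$ gives $\sum_{k}\|\theta_{k}(t)\|_{F}^{2}\le(1-t)\sum_{k}\|\theta_{k}\|_{F}^{2}+t\sum_{k}\|\theta'_{k}\|_{F}^{2}\le R_{\Theta}^{2}$, so the AM–GM step goes through cleanly and uniformly in $t$. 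A minor technical point is justifying the fundamental theorem of calculus here, which is fine since $t\mapsto f_{\Theta(t)}(x)$ is globally Lipschitz and piecewise $C^{1}$ on $[0,1]$ ($\mathrm{Relu}$ contributes only finitely many break‑points for fixed $x$).
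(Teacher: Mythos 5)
Your proposal is correct and is essentially the paper's argument: the paper likewise bounds the parameter-space gradient layerwise via the chain rule (getting $L_{\ell_2}\|x\|_2\prod_{k\ne l}\|\theta_k\|_F$ per layer), applies the same AM--GM step to get $(R_\Theta/\sqrt{L-1})^{L-1}$ and the same $\sqrt{L}$ from combining layers, and converts the uniform gradient bound into a Lipschitz bound by implicitly integrating along the segment inside the convex Frobenius ball. Your reorganization (peeling off the loss Lipschitzness first and bounding the parameter-perturbation of $f$ itself) and your explicit treatment of the interpolation/convexity point that kills the spurious $2^{(L-1)/2}$ are just more careful renderings of what the paper does when it asserts the Lipschitz constant equals $\sup_{\Theta\in\varTheta}\|\nabla_\theta\ell(f_\Theta(x),y)\|_2$.
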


\begin{lemma}\label{lipschitz-jacobian}
Assume that there are no pre-activations $h_{l_{pre}} = (\theta_{l}\cdots\sigma(\theta_1 x)\cdots)$ which are exactly 0, the Frobenius-norm of parameter $\Theta$ is within $R_{\Theta}$, i.e., $\norm{\Theta}_{F} \leq R_{\Theta}$, and the activation function $\sigma(\cdot)$ is the rectified linear unit function $\emph{Relu}(x)=\max\{x,0\}$. For $\Theta, \Theta' \in \varTheta$, it follows that

\begin{equation*}
\begin{aligned}
\| \|\nabla_{x}f_{\Theta}(x)\|^{2}_{F} - \|\nabla_{x}f_{\Theta'}(x)\|^{2}_{F} \|_{L_{1}(P_{n})} &\leq L^{F}_{1P_{n}} \norm{\Theta-\Theta'}_{F} \\
\| \|\nabla_{x}f_{\Theta}(x)\|_{1,1} - \|\nabla_{x}f_{\Theta'}(x)\|_{1,1} \|_{L_{1}(P_{n})} &\leq L^{1}_{1P_{n}} \norm{\Theta-\Theta'}_{F} .
\end{aligned}
\end{equation*}

where
\begin{equation}\label{eq-LipL2}
L^{F}_{1P_{n}} = 2\sqrt{L} \left(\frac{R_{\Theta}}{\sqrt{L-1}}\right)^{2L-1}, \quad L^{1}_{1P_{n}}=\sqrt{Lkd} \left(\frac{R_{\Theta}}{\sqrt{L-1}}\right)^{L-1}
\end{equation}
\end{lemma}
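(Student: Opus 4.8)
The plan is to prove a single Lipschitz estimate for the map $\Theta \mapsto \nabla_{x} f_{\Theta}(x)$ in Frobenius norm and then deduce both stated inequalities from it by elementary manipulations, averaging over the sample only at the very end. All constants below will be independent of $x$, so once the per-sample statement is established the passage to the $L_{1}(P_{n})$ metric is immediate.

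\emph{Step 1 (structure of the input-Jacobian).} Fix a sample $x$. Since $\sigma$ is the ReLU function, which is piecewise linear with derivative in $\{0,1\}$, once the sign of every pre-activation is fixed the network becomes a fixed multilinear function of $(\theta_{1},\dots,\theta_{L})$ and
\[
\nabla_{x} f_{\Theta}(x) = \theta_{L}\,\Sigma_{L-1}\,\theta_{L-1}\,\Sigma_{L-2}\cdots\Sigma_{1}\,\theta_{1},
\]
where each $\Sigma_{l}$ is a diagonal $0$--$1$ matrix recording which units of layer $l$ are active (the ReLU derivatives). The segment $\{t\Theta+(1-t)\Theta':t\in[0,1]\}$ lies in the ball $\{\norm{\cdot}_{F}\le R_{\Theta}\}$ by convexity; the hypothesis that no pre-activation equals $0$ (read along this segment) ensures that no pre-activation changes sign there, hence every $\Sigma_{l}$ is constant along the segment and coincides for $\Theta$ and $\Theta'$. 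Consequently $\Theta\mapsto\nabla_{x}f_{\Theta}(x)$ is a genuine smooth, multilinear map on a neighbourhood of the segment.

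\emph{Step 2 (Lipschitz and magnitude bounds for the Jacobian).} The differential of this map in a direction $(\delta\theta_{1},\dots,\delta\theta_{L})$ is $\sum_{j=1}^{L}\theta_{L}\Sigma_{L-1}\cdots\Sigma_{j}(\delta\theta_{j})\Sigma_{j-1}\cdots\Sigma_{1}\theta_{1}$. Using $\|AB\|_{F}\le\|A\|_{\text{op}}\|B\|_{F}$, $\|\Sigma_{l}\|_{\text{op}}\le1$, $\|\theta_{l}\|_{\text{op}}\le\|\theta_{l}\|_{F}$, and the AM--GM bound $\prod_{l\ne j}\|\theta_{l}\|_{F}\le\bigl(\tfrac{1}{L-1}\sum_{l\ne j}\|\theta_{l}\|_{F}^{2}\bigr)^{(L-1)/2}\le (R_{\Theta}/\sqrt{L-1})^{L-1}$, each summand has Frobenius norm at most $(R_{\Theta}/\sqrt{L-1})^{L-1}\|\delta\theta_{j}\|_{F}$; a Cauchy--Schwarz step over $j$ then bounds the differential, as a linear map, by $\sqrt{L}\,(R_{\Theta}/\sqrt{L-1})^{L-1}$ at every point of the segment. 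Integrating along the segment gives
\[
\bigl\|\nabla_{x}f_{\Theta}(x)-\nabla_{x}f_{\Theta'}(x)\bigr\|_{F}\le\sqrt{L}\left(\frac{R_{\Theta}}{\sqrt{L-1}}\right)^{L-1}\norm{\Theta-\Theta'}_{F}.
\]
The same inequalities give the magnitude bound $\|\nabla_{x}f_{\Theta}(x)\|_{F}\le\|\theta_{L}\|_{F}\prod_{l=1}^{L-1}\|\theta_{l}\|_{\text{op}}\le\prod_{l=1}^{L}\|\theta_{l}\|_{F}\le(R_{\Theta}/\sqrt{L})^{L}\le(R_{\Theta}/\sqrt{L-1})^{L}$, uniformly over the ball.

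\emph{Step 3 (transfer to the two regularizers) and the main obstacle.} For the squared Frobenius norm, apply $|a^{2}-b^{2}|\le(|a|+|b|)|a-b|$ with $a=\|\nabla_{x}f_{\Theta}(x)\|_{F}$, $b=\|\nabla_{x}f_{\Theta'}(x)\|_{F}$, bound $|a-b|$ by the reverse triangle inequality with $\|\nabla_{x}f_{\Theta}(x)-\nabla_{x}f_{\Theta'}(x)\|_{F}$, and combine with the two bounds of Step 2 to obtain the constant $2\sqrt{L}(R_{\Theta}/\sqrt{L-1})^{2L-1}$. For the $\ell_{1,1}$ norm, use $\bigl|\,\|A\|_{1,1}-\|A'\|_{1,1}\,\bigr|\le\|A-A'\|_{1,1}\le\sqrt{kd}\,\|A-A'\|_{F}$ for $k\times d$ matrices (Cauchy--Schwarz over the $kd$ entries), giving $\sqrt{Lkd}\,(R_{\Theta}/\sqrt{L-1})^{L-1}$; averaging over $x_{1},\dots,x_{n}$ then yields the stated $L_{1}(P_{n})$ inequalities. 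The main obstacle is Step 1: a ReLU network's input-Jacobian is generically \emph{discontinuous} in $\Theta$ (it jumps whenever a pre-activation crosses $0$), so a naive mean-value or path-integration argument is not available; the assumption that no pre-activation is exactly $0$ is precisely what forbids such crossings along $[\Theta,\Theta']$, restoring the constancy of the $\Sigma_{l}$ and making $\Theta\mapsto\nabla_{x}f_{\Theta}(x)$ smooth near the segment, after which Steps 2 and 3 are routine norm bookkeeping.
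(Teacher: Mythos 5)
Your proof is correct and arrives at exactly the constants $L^{F}_{1P_{n}}=2\sqrt{L}(R_{\Theta}/\sqrt{L-1})^{2L-1}$ and $L^{1}_{1P_{n}}=\sqrt{Lkd}\,(R_{\Theta}/\sqrt{L-1})^{L-1}$, but by a genuinely different route. The paper differentiates the two scalar regularizers directly with respect to each $\theta_{l}$, writes out explicit formulas for $\partial\|\nabla_{x}f\|_{F}^{2}/\partial\theta_{l}$ and $\partial\|\nabla_{x}f\|_{1,1}/\partial\theta_{l}$ (the latter producing a $\mathrm{sign}(\nabla_{x}f)$ factor whose norm contributes the $\sqrt{kd}$), bounds each layer's contribution by $\prod_{j\neq l}\|\theta_{j}\|_{F}$-type products, and assembles the parameter-gradient norm via AM--GM; the no-zero-pre-activation hypothesis enters only to kill the terms $\partial\,\mathrm{diag}(h'_{l})/\partial\theta_{j}$. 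You instead prove one Lipschitz estimate and one magnitude estimate for the matrix-valued map $\Theta\mapsto\nabla_{x}f_{\Theta}(x)$ using its multilinear structure, and transfer to both regularizers with $|a^{2}-b^{2}|\leq(|a|+|b|)\,|a-b|$ and $\bigl|\|A\|_{1,1}-\|A'\|_{1,1}\bigr|\leq\sqrt{kd}\,\|A-A'\|_{F}$. This buys you two things: you never have to differentiate the non-smooth $\|\cdot\|_{1,1}$ (the paper's sign-function manipulation is purely formal), and you make explicit the one genuinely delicate point, namely that the input-Jacobian of a ReLU network is discontinuous in $\Theta$ when a pre-activation crosses zero, so the hypothesis must be read as holding along the whole segment $[\Theta,\Theta']$ (which it does, since the Frobenius ball is convex) to keep the activation pattern fixed; the paper's gradient-norm argument silently needs the same absolute continuity along the segment. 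The one place to tighten your write-up is Step 1's phrasing ``coincides for $\Theta$ and $\Theta'$'': state explicitly that continuity of the pre-activations in $\Theta$ plus the intermediate value theorem is what upgrades ``nonzero on the segment'' to ``constant sign on the segment.'' Otherwise the bookkeeping in Steps 2--3 matches the paper's bounds term for term.
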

There are some differences lying in the Lipschitz constants of the standard loss function and the Jacobian regularization. Specifically, $L^{\ell}_{1P_{n}}$ depends on the data term $1/n \sum_{i=1}^{n} \| x_{i} \|_{2}$ but $L^{F}_{1P_{n}}$ does not. If we assume that $x$ obeys normal distribution, then $1/n \sum_{i=1}^{n} \| x_{i} \|_{2}$ scales with $\sqrt{d}$, where $d$ is the input dimension. The term $( R_{\Theta}/\sqrt{L-1})^{L-1}$ in $L^{\ell}_{1P_{n}}$ and $L^{1}_{1P_{n}}$shows an exponential dependence on the number of layers $L$ and is due to the worst case Lipschitz property of deep neural networks. The term $( R_{\Theta}/\sqrt{L-1})^{2L-1}$ in $L^{F}_{1P_{n}}$ suggests that the Jacobian regularization $\|\nabla_{x}f(x)\|^{2}_{F}$ behaves like a neural network with $2L$ layers. Note that $L^{1}_{1P_{n}}$ also explicitly depends on the output dimension $k$. Overall, compared with the Lipschitz constant of standard loss function, the Lipschitz constant of Jacobian regularization is also well controlled.

Before we bound the Rademacher complexity, we introduce some useful lemmas to bound the norm of model $f(x)$ and the norm of loss function $\ell(f(x),y)$ via Jacobian norms.

\begin{lemma}\label{lemma-derivative}
Let the activation function $\sigma(\cdot)$ be the rectified linear unit function $\emph{Relu}(x)=\max\{x,0\}$. Then for $x \in \mathcal{X}$ and $f \in \mathcal{F}$, it follows that
\begin{equation*}
f(x) = \nabla_{x}f(x)^{ \top} x.
\end{equation*}
\end{lemma}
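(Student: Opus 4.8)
The plan is to exploit the positive homogeneity of the ReLU activation. Since $\mathrm{Relu}(tz)=t\,\mathrm{Relu}(z)$ for every $t\ge 0$ and every $z$ (applied coordinatewise), a one-line induction on the layer index shows that the whole network is positively homogeneous of degree one in its input, i.e. $f_{\Theta}(tx)=t\,f_{\Theta}(x)$ for all $t>0$ and all $x$: the first pre-activation scales as $\theta_1(tx)=t(\theta_1 x)$, so $\sigma(\theta_1(tx))=t\,\sigma(\theta_1 x)$, and feeding this forward through the remaining affine-then-ReLU layers (each of which commutes with positive scaling) preserves the factor $t$ up to the output. Given homogeneity, the identity is just Euler's theorem: at any $x$ where $f$ is differentiable, differentiating $f_{\Theta}(tx)=t\,f_{\Theta}(x)$ in $t$ gives $\nabla_{x}f_{\Theta}(tx)^{\top}x$ on the left and $f_{\Theta}(x)$ on the right, and evaluating at $t=1$ yields $\nabla_{x}f_{\Theta}(x)^{\top}x=f_{\Theta}(x)$.

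Alternatively — and this is the version I would actually write out, since it avoids differentiating an identity — one can compute the Jacobian directly. Writing $h_{0}=x$, $h_{l}=\sigma(\theta_{l}h_{l-1})$, and letting $D_{l}$ be the diagonal $0/1$ matrix of ReLU derivatives evaluated at the $l$-th pre-activation, the chain rule gives $\nabla_{x}f_{\Theta}(x)^{\top}=\theta_{L}D_{L-1}\theta_{L-1}\cdots D_{1}\theta_{1}$. The only ingredient needed is the pointwise identity $\mathrm{Relu}'(u)\,u=\mathrm{Relu}(u)$, valid for every $u\neq 0$ (and trivially at $u=0$), which in matrix form reads $D_{l}(\theta_{l}h_{l-1})=\sigma(\theta_{l}h_{l-1})=h_{l}$. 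Applying this from the innermost factor outward, $\nabla_{x}f_{\Theta}(x)^{\top}x=\theta_{L}D_{L-1}\cdots D_{1}\theta_{1}x=\theta_{L}D_{L-1}\cdots D_{2}\theta_{2}h_{1}=\cdots=\theta_{L}h_{L-1}=f_{\Theta}(x)$.

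The only genuine subtlety, and the single place where care is required, is the non-differentiability of ReLU on the hyperplanes where some pre-activation is exactly zero, where the diagonal matrices $D_{l}$ are not well defined. This is handled simply by restricting attention to points $x$ at which $f$ is differentiable — which is already implicit in the statement, since it refers to $\nabla_{x}f(x)$, and which is the full-measure regime covered by the standing assumptions used elsewhere (no pre-activation exactly $0$). No other step presents any difficulty; everything reduces to the homogeneity/Euler computation or, equivalently, to the elementary identity $\mathrm{Relu}'(u)\,u=\mathrm{Relu}(u)$ propagated layer by layer.
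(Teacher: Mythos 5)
Your second (chain-rule) argument — expanding $\nabla_{x}f_{\Theta}(x)^{\top}=\theta_{L}D_{L-1}\cdots D_{1}\theta_{1}$ and absorbing $x$ from the inside out via $\mathrm{Relu}'(u)\,u=\mathrm{Relu}(u)$ — is exactly the proof given in the paper, and it is correct. The Euler/homogeneity framing you mention first is a valid alternative lens on the same fact, but since the version you say you would actually write coincides with the paper's computation, there is nothing further to flag.
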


In fact, this property is not restricted to the $\text{Relu}(\cdot)$ activation function. Lemma \ref{lemma-derivative} also holds when adopting activation functions such as LeakyRelu.

\begin{lemma}\label{lemma-variance}
Assume that the input space $\mathcal{X}$ is bounded such that the $\ell_{\infty}$-norm of each $x \in \mathcal{X}$ is within $R_{x}$, i.e., $\sup \{\| x \|_{\infty}: x \in \mathcal{X}\}\leq R_{x}$. For $f \in \mathcal{F}_{r_{2}}$ such that $P_{n}\|\nabla_{x}f\|_{F}^{2} \leq r_{2}$, it follows that
\begin{equation*}
\begin{aligned}
P_{n}\|f\|_{2} \leq \sqrt{r_{2}} \sqrt{\frac1n \sum_{i=1}^{n} \| x_{i} \|_{2}^{2} }.   
\end{aligned}
\end{equation*}
For $f \in \mathcal{F}_{r_{1}}$ such that $P_{n}\|\nabla_{x}f\|_{1,1} \leq r_{1}$, it follows that
\begin{equation*}
\begin{aligned}
P_{n}\|f\|_{1} \leq r_{1}R_{x}.   
\end{aligned}
\end{equation*}
\end{lemma}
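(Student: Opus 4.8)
The plan is to deduce both inequalities from the pointwise identity $f(x)=\nabla_{x}f(x)^{\top}x$ supplied by Lemma \ref{lemma-derivative}, combined with standard matrix--vector norm inequalities, and then to average over the sample $\{x_{i}\}_{i=1}^{n}$, recalling that $P_{n}\|f\|_{2}=\frac1n\sum_{i=1}^{n}\|f(x_{i})\|_{2}$ and $P_{n}\|f\|_{1}=\frac1n\sum_{i=1}^{n}\|f(x_{i})\|_{1}$ by analogy with the definition of $P_{n}\|\nabla_{x}f\|_{F}^{2}$.

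For the $\ell_{2}$ bound, I would first observe that for each index $i$, $\|f(x_{i})\|_{2}=\|\nabla_{x}f(x_{i})^{\top}x_{i}\|_{2}\leq\|\nabla_{x}f(x_{i})^{\top}\|_{\text{op}}\|x_{i}\|_{2}\leq\|\nabla_{x}f(x_{i})\|_{F}\|x_{i}\|_{2}$, using that the spectral norm is dominated by the Frobenius norm and that transposition preserves both. Averaging gives $P_{n}\|f\|_{2}\leq\frac1n\sum_{i=1}^{n}\|\nabla_{x}f(x_{i})\|_{F}\|x_{i}\|_{2}$. Applying the Cauchy--Schwarz inequality to the sequences $(\|\nabla_{x}f(x_{i})\|_{F})_{i}$ and $(\|x_{i}\|_{2})_{i}$ then yields $P_{n}\|f\|_{2}\leq\sqrt{\tfrac1n\sum_{i=1}^{n}\|\nabla_{x}f(x_{i})\|_{F}^{2}}\cdot\sqrt{\tfrac1n\sum_{i=1}^{n}\|x_{i}\|_{2}^{2}}=\sqrt{P_{n}\|\nabla_{x}f\|_{F}^{2}}\cdot\sqrt{\tfrac1n\sum_{i=1}^{n}\|x_{i}\|_{2}^{2}}$, and the membership $f\in\mathcal{F}_{r_{2}}$, i.e.\ $P_{n}\|\nabla_{x}f\|_{F}^{2}\leq r_{2}$, finishes this case.

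For the $\ell_{1}$ bound, I would argue entrywise: writing $(f(x_{i}))_{j}=\sum_{l=1}^{d}(\nabla_{x}f(x_{i}))_{lj}(x_{i})_{l}$ and applying H\"older's inequality gives $|(f(x_{i}))_{j}|\leq\|x_{i}\|_{\infty}\sum_{l=1}^{d}|(\nabla_{x}f(x_{i}))_{lj}|$; summing over the output coordinates $j$ gives $\|f(x_{i})\|_{1}\leq\|x_{i}\|_{\infty}\|\nabla_{x}f(x_{i})\|_{1,1}$. Invoking the boundedness hypothesis $\|x_{i}\|_{\infty}\leq R_{x}$ and then averaging yields $P_{n}\|f\|_{1}\leq R_{x}\,P_{n}\|\nabla_{x}f\|_{1,1}\leq R_{x}r_{1}$, where the last step is the constraint $f\in\mathcal{F}_{r_{1}}$.

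This argument is essentially routine, so I do not anticipate a genuine obstacle; the one place that needs a little care is that $P_{n}\|f\|_{2}$ is the average of the norms rather than their root mean square, so one cannot simply bound it pointwise by $\sqrt{r_{2}}$ but must route the $\sqrt{r_{2}}$ factor out of the sum through Cauchy--Schwarz. A minor bookkeeping check is to fix the convention that $\nabla_{x}f(x)$ is the $d\times k$ Jacobian (consistent with $f(x)=\nabla_{x}f(x)^{\top}x$ in Lemma \ref{lemma-derivative}), so that the transposed spectral-norm estimate and the $\|\cdot\|_{1,1}$ summation match the definitions used elsewhere in the paper.
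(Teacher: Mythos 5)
Your proof is correct and follows essentially the same route as the paper: both cases start from the identity $f(x)=\nabla_{x}f(x)^{\top}x$ of Lemma \ref{lemma-derivative}, the $\ell_{2}$ case then chains the operator-norm and Frobenius-norm bounds with Cauchy--Schwarz over the sample exactly as the paper does, and the $\ell_{1}$ case is the paper's coordinatewise H\"older argument. Your remark that Cauchy--Schwarz is needed to move $\sqrt{r_{2}}$ out of the average (rather than a pointwise bound) correctly identifies the one non-trivial step, and it is handled the same way in the paper.
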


\begin{lemma}\label{lemma-variance-loss}
Assume that $\sup \{\| x \|_{\infty}: x \in \mathcal{X}\}\leq R_{x}$, and that the loss function $\ell(f(x),y)$ is $L_{\ell_{2}}$-Lipschitz on $f(x)$ with respect to $\|\cdot\|_{2}$ metric and is $L_{\ell_{\infty}}$-Lipschitz on $f(x)$ with respect to $\|\cdot\|_{1}$ metric. Denote $\Theta_{0} \in \varTheta$ to be the parameters being zeros. For $f \in \mathcal{F}_{r_{2}}$, it follows that
\begin{equation*}
\begin{aligned}
\| \ell(f(x),y) - \ell(f_{\Theta_{0}}(x),y) \|_{L_{1}(P_{n})} \leq L_{\ell_{2}}  \sqrt{r_{2}} \sqrt{\frac1n \sum_{i=1}^{n} \| x_{i} \|_{2}^{2} }.
\end{aligned}
\end{equation*}
For $f \in \mathcal{F}_{r_{1}}$, it follows that
\begin{equation*}
\begin{aligned}
\| \ell(f(x),y) - \ell(f_{\Theta_{0}}(x),y) \|_{L_{1}(P_{n})} \leq L_{\ell_{\infty}} r_{1}R_{x}.
\end{aligned}
\end{equation*}
\end{lemma}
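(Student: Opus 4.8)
The plan is to reduce both inequalities to Lemma~\ref{lemma-variance} through the Lipschitz property of the loss. First I would note that the zero-parameter network is identically zero: since $\sigma(0)=0$ and every weight matrix in $\Theta_{0}$ vanishes, unrolling the recursion in \eqref{eq-DNN} gives $f_{\Theta_{0}}(x)=0$ for all $x\in\mathcal{X}$. Consequently $\ell(f(x),y)-\ell(f_{\Theta_{0}}(x),y)$ is exactly the gap between the loss evaluated at $f(x)$ and at the origin, and the whole statement amounts to controlling $P_{n}\|f\|_{2}$ or $P_{n}\|f\|_{1}$.

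For the $\ell_{2}$ bound I would invoke the $L_{\ell_{2}}$-Lipschitz assumption pointwise on each training example: $|\ell(f(x_{i}),y_{i})-\ell(f_{\Theta_{0}}(x_{i}),y_{i})|\leq L_{\ell_{2}}\|f(x_{i})-f_{\Theta_{0}}(x_{i})\|_{2}=L_{\ell_{2}}\|f(x_{i})\|_{2}$. Averaging over $i=1,\dots,n$ and using the definition of the $L_{1}(P_{n})$-norm yields $\|\ell(f(x),y)-\ell(f_{\Theta_{0}}(x),y)\|_{L_{1}(P_{n})}\leq L_{\ell_{2}}\,P_{n}\|f\|_{2}$. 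Since $f\in\mathcal{F}_{r_{2}}$ means $P_{n}\|\nabla_{x}f\|_{F}^{2}\leq r_{2}$, the first part of Lemma~\ref{lemma-variance} gives $P_{n}\|f\|_{2}\leq\sqrt{r_{2}}\sqrt{\tfrac{1}{n}\sum_{i=1}^{n}\|x_{i}\|_{2}^{2}}$, and combining the two inequalities produces the claimed bound.

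The $\ell_{\infty}$ bound follows the same two-step pattern but pairs the $\|\cdot\|_{1}$-Lipschitz constant with the $\|\cdot\|_{1,1}$-based estimate: pointwise $|\ell(f(x_{i}),y_{i})-\ell(f_{\Theta_{0}}(x_{i}),y_{i})|\leq L_{\ell_{\infty}}\|f(x_{i})\|_{1}$, so averaging gives $\|\ell(f(x),y)-\ell(f_{\Theta_{0}}(x),y)\|_{L_{1}(P_{n})}\leq L_{\ell_{\infty}}\,P_{n}\|f\|_{1}$, and the second part of Lemma~\ref{lemma-variance}, applied with $f\in\mathcal{F}_{r_{1}}$ and the input bound $\sup\{\|x\|_{\infty}:x\in\mathcal{X}\}\leq R_{x}$, yields $P_{n}\|f\|_{1}\leq r_{1}R_{x}$, completing the argument.

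There is no substantive obstacle here; the proof is a one-line Lipschitz estimate followed by a citation of Lemma~\ref{lemma-variance}. The only points demanding care are verifying that $f_{\Theta_{0}}\equiv 0$ so that the reference point of the Lipschitz bound is the origin, and matching each Lipschitz constant with the correct norm, i.e.\ using $L_{\ell_{2}}$ with the Frobenius/$\ell_{2}$ chain and $L_{\ell_{\infty}}$ with the $\ell_{1,1}/\ell_{1}$ chain, so that Lemma~\ref{lemma-variance} applies verbatim. Should one prefer not to rely on $f_{\Theta_{0}}=0$, an equivalent route is to treat $\ell(f_{\Theta_{0}}(x),y)$ as a fixed reference value and apply the Lipschitz bound to the difference directly; the computation is unchanged.
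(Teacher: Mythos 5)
Your proof is correct and matches the paper's argument exactly: both observe that $f_{\Theta_{0}}\equiv 0$, apply the pointwise Lipschitz bound to reduce the $L_{1}(P_{n})$-norm of the loss difference to $L_{\ell_{2}}P_{n}\|f\|_{2}$ (resp.\ $L_{\ell_{\infty}}P_{n}\|f\|_{1}$), and then invoke Lemma~\ref{lemma-variance}. No differences worth noting.
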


With the above lemmas at hand, we are now ready to bound the Rademacher complexity by the following two theorems.

\begin{theorem}\label{rademacher}
Let the Frobenius-norm of parameter $\Theta$ is within $R_{\Theta}$, i.e., $\norm{\Theta}_{F} \leq R_{\Theta}$. Assume that the input space is bounded such that $\sup \{\| x \|_{\infty}: x \in \mathcal{X}\}\leq R_{x}$, and that the loss function $\ell(f(x),y)$ is the cross entropy loss in classification tasks. Denote the total number of parameters as $P$. Then the empirical Rademacher complexity of $\ell_{\mathcal{F}_{r_{2}}}$ and $\ell_{\mathcal{F}_{r_{1}}}$ respectively satisfies that
\begin{equation*}
\begin{aligned}
\mathcal{R}_{S}(\ell_{\mathcal{F}_{r_{2}}}) \leq 12 \sqrt{\frac{2r_{2}}{n} \sum_{i=1}^{n} \| x_{i} \|_{2}^{2} } \sqrt{\frac{P}{n}} \left( \sqrt{\log \left| \frac{3R_{\Theta}L^{\ell}_{1P_{n}}}{\sqrt{2r_{2}/n \sum_{i=1}^{n} \| x_{i} \|_{2}^{2} }} \right| } +\sqrt{ \frac{\pi}{2} } \right),
\end{aligned}
\end{equation*}
\begin{equation*}
\begin{aligned}
\mathcal{R}_{S}(\ell_{\mathcal{F}_{r_{1}}}) \leq 12r_{1}R_{x} \sqrt{\frac{P}{n}} \left( \sqrt{\log \left| \frac{3R_{\Theta}L^{\ell}_{1P_{n}}}{r_{1}R_{x}} \right| } +\sqrt{ \frac{\pi}{2} } \right).
\end{aligned}
\end{equation*}
\end{theorem}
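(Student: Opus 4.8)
The plan is to reduce the empirical Rademacher complexity to a covering‑number (chaining) integral, convert covers of the loss class into covers of the finite‑dimensional parameter set via the Lipschitz‑in‑parameters estimate of Lemma~\ref{lipschitz-parameter}, control the latter by a standard volumetric bound, and finally evaluate the resulting integral in closed form. I carry this out first for the $\ell_{2}$ class $\ell_{\mathcal{F}_{r_{2}}}$ and then note that the $\ell_{\infty}$ class $\ell_{\mathcal{F}_{r_{1}}}$ is identical after swapping the relevant constants.

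For the $\ell_{2}$ case I would center $\ell_{\mathcal{F}_{r_{2}}}$ at the anchor $\ell(f_{\Theta_{0}}(\cdot),\cdot)$ with $\Theta_{0}$ the all‑zero parameter; note $f_{\Theta_{0}}\equiv 0$ has vanishing Jacobian, so $f_{\Theta_{0}}\in\mathcal{F}_{r_{2}}$ and $0$ lies in the centered class, and since the Rademacher average is unchanged by subtracting a fixed function it suffices to bound the complexity of the centered class. By Lemma~\ref{lemma-variance-loss} together with $L_{\ell_{2}}\le\sqrt{2}$ from Lemma~\ref{lemma-lipschitz}, every member of the centered class has empirical norm at most $D_{2}:=\sqrt{(2r_{2}/n)\sum_{i=1}^{n}\|x_{i}\|_{2}^{2}}$, which will be the upper limit of the chaining integral. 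For the metric entropy, Lemma~\ref{lipschitz-parameter} says $\Theta\mapsto\ell(f_{\Theta}(\cdot),\cdot)$ is $L^{\ell}_{1P_{n}}$‑Lipschitz from $(\varTheta,\norm{\cdot}_{F})$ into the loss class, so an $\eta$‑cover of the Frobenius ball $\{\Theta:\norm{\Theta}_{F}\le R_{\Theta}\}\subset\R^{P}$ induces an $L^{\ell}_{1P_{n}}\eta$‑cover of $\ell_{\mathcal{F}_{r_{2}}}$. Using the volumetric bound $\mathcal{N}(\eta,\{\Theta:\norm{\Theta}_{F}\le R_{\Theta}\},\norm{\cdot}_{F})\le(3R_{\Theta}/\eta)^{P}$ for $\eta\le R_{\Theta}$, this gives $\mathcal{N}(\delta,\ell_{\mathcal{F}_{r_{2}}},\rho)\le(3R_{\Theta}L^{\ell}_{1P_{n}}/\delta)^{P}$.

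Feeding this into the chaining (Dudley) bound $\mathcal{R}_{S}(\mathcal{G})\le\frac{12}{\sqrt{n}}\int_{0}^{D_{2}}\sqrt{\log\mathcal{N}(\delta,\mathcal{G},\rho)}\,d\delta$ yields
\begin{equation*}
\mathcal{R}_{S}(\ell_{\mathcal{F}_{r_{2}}})\le 12\sqrt{\frac{P}{n}}\int_{0}^{D_{2}}\sqrt{\log\frac{3R_{\Theta}L^{\ell}_{1P_{n}}}{\delta}}\,d\delta .
\end{equation*}
The final step is the elementary inequality $\int_{0}^{D}\sqrt{\log(A/\delta)}\,d\delta\le D\bigl(\sqrt{\log(A/D)}+\sqrt{\pi/2}\bigr)$, obtained by the substitution $\delta=A e^{-s^{2}}$ and bounding the residual Gaussian tail (with the absolute value inserted to cover the case $A<D$); this reproduces exactly the stated bound on $\mathcal{R}_{S}(\ell_{\mathcal{F}_{r_{2}}})$. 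The $\ell_{\infty}$ case is the same argument: anchoring again at $\Theta_{0}$, Lemma~\ref{lemma-variance-loss} with $L_{\ell_{\infty}}\le 1$ gives empirical radius $D_{1}:=r_{1}R_{x}$, the identical parameter‑covering estimate applies, and evaluating $\int_{0}^{D_{1}}\sqrt{\log(3R_{\Theta}L^{\ell}_{1P_{n}}/\delta)}\,d\delta$ yields the bound on $\mathcal{R}_{S}(\ell_{\mathcal{F}_{r_{1}}})$.

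The main obstacle is the bookkeeping of the empirical metric used in the chaining step: the Rademacher process $g\mapsto\frac1n\sum_{i}\xi_{i}g(x_{i})$ has sub‑Gaussian increments only with respect to the $L_{2}(P_{n})$ distance, so the chaining integral must be taken in $L_{2}(P_{n})$, whereas Lemmas~\ref{lipschitz-parameter} and~\ref{lemma-variance-loss} are phrased in $L_{1}(P_{n})$. One resolves this either by invoking the $L_{2}(P_{n})$ analogues of those lemmas — they carry the same constants with $\frac1n\sum_{i}\|x_{i}\|_{2}$ replaced by $\sqrt{\frac1n\sum_{i}\|x_{i}\|_{2}^{2}}$, which is precisely the data term appearing in $D_{2}$ — or by using the uniform boundedness of the cross‑entropy loss to convert $L_{1}(P_{n})$ covers into $L_{2}(P_{n})$ covers. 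Two minor points remain: letting the inner radius of Dudley's integral tend to $0$ is legitimate because $\sqrt{\log(1/\delta)}$ is integrable at the origin; and although $\mathcal{F}_{r_{2}}$ and $\mathcal{F}_{r_{1}}$ depend on the sample, the estimate is derived for a fixed dataset, so there is no circularity, as already remarked after Proposition~\ref{proposition-surrogate}.
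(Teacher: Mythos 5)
Your proposal follows essentially the same route as the paper's own proof: Dudley's entropy integral truncated at the empirical radius supplied by Lemma~\ref{lemma-variance-loss} (with the cross-entropy Lipschitz constants of Lemma~\ref{lemma-lipschitz}), transfer of the covering number of the loss class to the parameter ball via the Lipschitz-in-parameters estimate of Lemma~\ref{lipschitz-parameter}, the volumetric bound $(3R_{\Theta}/\delta)^{P}$, and the closed-form evaluation $\int_{0}^{D}\sqrt{\log(A/\delta)}\,d\delta\le D(\sqrt{|\log(A/D)|}+\sqrt{\pi/2})$. Your explicit treatment of the $L_{1}(P_{n})$-versus-$L_{2}(P_{n})$ metric in the chaining step is in fact more careful than the paper, which runs Dudley's integral directly in the $L_{1}(P_{n})$ metric with only a brief remark; apart from that point of rigor the two arguments coincide.
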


One sees from Theorem \ref{rademacher} that the Rademacher complexity of the standard loss function class $\mathcal{R}_{S}(\ell_{\mathcal{F}_{r_{1}}})$ and $\mathcal{R}_{S}(\ell_{\mathcal{F}_{r_{2}}})$ depend on the Jacobian norms $r_{1}$ and $r_{2}$, respectively. By restricting the Jacobian norms $r_{1}$ and $r_{2}$ through Jacobian regularization, one can effectively reduce the Rademacher complexity of the standard loss function class, and further reduce the generalization error in the standard setting without adversarial attacks. In addition, we focus on vector-valued models. Vector-contraction inequality is commonly used for transforming the Rademacher complexity of loss function class to the Rademacher complexity of vector-valued models \citep{Maurer2016AVI,LI2023109356}. However, such technique results in an explicit linear dependence on the output dimension $k$. In Theorem \ref{rademacher}, we directly bound the Rademacher complexity of loss function class via the covering number technique and there is no explicit dependence on the output dimension. This result is more preferable when the output dimension is large.

\begin{theorem}\label{rademacher-jacobian}
Let the Frobenius-norm of parameter $\Theta$ is within $R_{\Theta}$, i.e., $\norm{\Theta}_{F} \leq R_{\Theta}$. Assume that there are no pre-activations which are exactly 0. Denote the total number of parameters as $P$. Then the empirical Rademacher complexity of $\mathcal{J}_{2\mathcal{F}_{r_{2}}}$ and $\mathcal{J}_{1\mathcal{F}_{r_{1}}}$ respectively satisfies that
\begin{equation*}
\begin{aligned}
\mathcal{R}_{S}(\mathcal{J}_{2\mathcal{F}_{r_{2}}}) &\leq 12r_{2} \sqrt{ \frac{P}{n} } \left(\sqrt{ \left| \log  \frac{3R_{\Theta}L^{F}_{1P_{n}}}{r_{2}} \right| }+  \sqrt{\frac{\pi}{2} } \right), \\
\mathcal{R}_{S}(\mathcal{J}_{1\mathcal{F}_{r_{1}}}) &\leq 12r_{1} \sqrt{ \frac{P}{n} } \left(\sqrt{ \left| \log  \frac{3R_{\Theta}L^{1}_{1P_{n}}}{r_{1}} \right| }+  \sqrt{\frac{\pi}{2} } \right).
\end{aligned}
\end{equation*}
\end{theorem}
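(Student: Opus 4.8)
The plan is to run, for the two Jacobian-norm function classes, exactly the covering-number argument that underlies Theorem~\ref{rademacher}, with the standard loss class replaced by $\mathcal{J}_{2\mathcal{F}_{r_{2}}}$ (resp.\ $\mathcal{J}_{1\mathcal{F}_{r_{1}}}$) and the Lipschitz-in-parameter constant $L^{\ell}_{1P_{n}}$ replaced by $L^{F}_{1P_{n}}$ (resp.\ $L^{1}_{1P_{n}}$) supplied by Lemma~\ref{lipschitz-jacobian}. I will describe the $\ell_{2}$ case; the $\ell_{\infty}$ case is word-for-word the same after the substitutions $\|\nabla_{x}f\|_{F}^{2}\mapsto\|\nabla_{x}f\|_{1,1}$, $r_{2}\mapsto r_{1}$, $L^{F}_{1P_{n}}\mapsto L^{1}_{1P_{n}}$. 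First I would record two elementary facts about $\mathcal{J}_{2\mathcal{F}_{r_{2}}}$: every member $g(x)=\|\nabla_{x}f(x)\|_{F}^{2}$ is nonnegative, and the zero function lies in the class since $\Theta_{0}$ satisfies both $\norm{\Theta_{0}}_{F}\le R_{\Theta}$ and $P_{n}\|\nabla_{x}f_{\Theta_{0}}\|_{F}^{2}=0\le r_{2}$; moreover the defining constraint of $\mathcal{F}_{r_{2}}$ is nothing but $\|g\|_{L_{1}(P_{n})}=P_{n}g\le r_{2}$, so the whole class sits inside the $L_{1}(P_{n})$-ball of radius $r_{2}$ about the zero function. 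This is precisely what keeps $r_{2}$, rather than a (possibly exponential-in-$L$) product of layer spectral norms, as the multiplicative scale of the final bound.

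Next I would transfer a cover of the parameter space to a cover of the function class. The effective parameters lie in the Euclidean ball $\{\norm{\Theta}_{F}\le R_{\Theta}\}\subset\R^{P}$, whose $(\delta/L^{F}_{1P_{n}})$-covering number with respect to $\norm{\cdot}_{F}$ is at most $\bigl(3R_{\Theta}L^{F}_{1P_{n}}/\delta\bigr)^{P}$. By Lemma~\ref{lipschitz-jacobian} the map $\Theta\mapsto\|\nabla_{x}f_{\Theta}\|_{F}^{2}$ is $L^{F}_{1P_{n}}$-Lipschitz from $(\varTheta,\norm{\cdot}_{F})$ into $(L_{1}(P_{n}),\rho)$, so the image of such a parameter cover is a $\delta$-cover of $\mathcal{J}_{2\mathcal{F}_{r_{2}}}$ in $\rho$, whence $\mathcal{N}(\delta,\mathcal{J}_{2\mathcal{F}_{r_{2}}},\rho)\le\bigl(3R_{\Theta}L^{F}_{1P_{n}}/\delta\bigr)^{P}$. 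Feeding this estimate, together with the radius $r_{2}$ as the cutoff, into the Dudley-type entropy-integral bound for the empirical Rademacher complexity that underlies Theorem~\ref{rademacher} gives
\[
\mathcal{R}_{S}\bigl(\mathcal{J}_{2\mathcal{F}_{r_{2}}}\bigr)\le\frac{12}{\sqrt{n}}\int_{0}^{r_{2}}\sqrt{\log\mathcal{N}(\delta,\mathcal{J}_{2\mathcal{F}_{r_{2}}},\rho)}\,\mathrm{d}\delta\le\frac{12\sqrt{P}}{\sqrt{n}}\int_{0}^{r_{2}}\sqrt{\Bigl|\log\tfrac{3R_{\Theta}L^{F}_{1P_{n}}}{\delta}\Bigr|}\,\mathrm{d}\delta .
\]

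It then remains to evaluate the integral. Writing $A:=3R_{\Theta}L^{F}_{1P_{n}}$, splitting $\sqrt{\lvert\log(A/\delta)\rvert}\le\sqrt{\lvert\log(A/r_{2})\rvert}+\sqrt{\log(r_{2}/\delta)}$ for $0<\delta\le r_{2}$, and using the elementary identity $\int_{0}^{r_{2}}\sqrt{\log(r_{2}/\delta)}\,\mathrm{d}\delta=r_{2}\int_{0}^{1}\sqrt{\log(1/u)}\,\mathrm{d}u=r_{2}\,\Gamma(3/2)\le r_{2}\sqrt{\pi/2}$, the right-hand side above is at most $12\,r_{2}\sqrt{P/n}\bigl(\sqrt{\lvert\log(3R_{\Theta}L^{F}_{1P_{n}}/r_{2})\rvert}+\sqrt{\pi/2}\bigr)$, the asserted bound for $\mathcal{J}_{2\mathcal{F}_{r_{2}}}$. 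The identical computation with $L^{1}_{1P_{n}}$ in place of $L^{F}_{1P_{n}}$ and $r_{1}$ in place of $r_{2}$ — the extra factor $\sqrt{kd}$ inside $L^{1}_{1P_{n}}$ affecting only the logarithmic term — produces the bound for $\mathcal{J}_{1\mathcal{F}_{r_{1}}}$.

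The step I expect to be the main obstacle is making the metrics cohere: the Lipschitz estimate of Lemma~\ref{lipschitz-jacobian}, and hence the cover, lives in the $L_{1}(P_{n})$ metric, and one has to be sure the entropy integral is genuinely truncated at the small scale $r_{2}$ and not at the crude uniform bound $\sup_{f}\|\nabla_{x}f(x)\|_{F}^{2}$, a product of layer spectral norms that can be exponential in the depth $L$. This is exactly where nonnegativity of $g=\|\nabla_{x}f\|_{F}^{2}$ and the constraint $P_{n}g\le r_{2}$ are used: together they give $\|g\|_{L_{1}(P_{n})}\le r_{2}$, so the diameter of the class in the metric in which the cover is built is controlled by $r_{2}$ alone, while the uniform Jacobian bound re-enters only inside the logarithm through $L^{F}_{1P_{n}}$ (resp.\ $L^{1}_{1P_{n}}$). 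Everything else — the parameter-ball covering number, the Lipschitz transfer via Lemma~\ref{lipschitz-jacobian}, and the integral evaluation — is routine.
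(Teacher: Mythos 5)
Your proposal is correct and follows essentially the same route as the paper's proof: a Dudley entropy integral in the $L_{1}(P_{n})$ metric truncated at $r_{2}$ (resp.\ $r_{1}$) because the constraint $P_{n}\|\nabla_{x}f\|_{F}^{2}\leq r_{2}$ makes the class a single-element cover at that scale, with the covering number transferred from the parameter ball $\{\norm{\Theta}_{F}\leq R_{\Theta}\}\subset\R^{P}$ via the Lipschitz constants $L^{F}_{1P_{n}}$ and $L^{1}_{1P_{n}}$ of Lemma~\ref{lipschitz-jacobian}, followed by the same split of the logarithm and the evaluation $\int_{0}^{r}\sqrt{\log(r/\delta)}\,\mathrm{d}\delta\leq r\sqrt{\pi/2}$. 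No gaps; the emphasis you place on why the integral terminates at $r_{2}$ rather than at a uniform Jacobian bound is exactly the point the paper's proof relies on.
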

Due to the fact that Jacobian regularization depends on the training samples, it is natural to ask whether Jacobian regularization is still effective beyond training samples. Comparing the Rademacher complexity $\mathcal{R}_{S}(\ell_{\mathcal{F}_{r_{2}}})$ or $\mathcal{R}_{S}(\ell_{\mathcal{F}_{r_{1}}})$ with $\mathcal{R}_{S}(\mathcal{J}_{2\mathcal{F}_{r_{2}}})$ or $\mathcal{R}_{S}(\mathcal{J}_{1\mathcal{F}_{r_{1}}})$ respectively, one can see that the Rademacher complexity of Jacobian regularization is well controlled relative to the Rademacher complexity of standard loss. Therefore, even though Jacobian regularization relies on sample points, it is still an effective regularization technique. 

Combing the Rademacher complexity of the standard loss function in Theorem \ref{rademacher} and the Rademacher complexity of Jacobian regularization in Theorem \ref{rademacher-jacobian}, we obtain the Rademacher complexity of surrogate robust loss function via Lemma \ref{lemma-decomposition}, and further  establish the robust generalization gap by virtue of Proposition \ref{proposition-surrogate}. If the adversarial attack intensity $\epsilon$ is not large and the Jacobian  norms $r_{1}$ or $r_{2}$ is well controlled, then the robust generalization gap can be well controlled. We will show in the following experiments that reducing Jacobian  norms $r_{1}$ or $r_{2}$ can indeed improve robust generalization.

\section{Experiments}
In this section, we perform experiments to demonstrate that Jacobian regularization can lead to robust generalization. As have been discussed before, the Jacobian norms $r_{1}$ and $r_{2}$ are algorithm dependent, and we will show that $r_{1}$ and $r_{2}$ can be well controlled via Jacobian regularization in the experiments. 

\textbf{Dataset:} We perform the experiments using the MNIST dataset. The MNIST dataset originally contains 60000 training and 10000 testing samples. We randomly select 1000 training samples for training and adopt all the testing samples for testing. We preprocess the data by normalizing the data between 0 and 1.

\textbf{Neural network architecture:} We adopt the multiple-layer fully connected neural network. The number of layers is 5. The number of hidden units in each layer is 100. We use Relu as the nonlinear activation function.

\textbf{Adversarial attacks:} To evaluate the robustness to adversarial attacks, we adopt the 
projected gradient
descent (PGD) \citep{Kurakin2016AdversarialEI,Madry2017TowardsDL} method to generate adversarial samples. For the $\ell_{2}$ PGD attack, we set the attack intensity $\epsilon$ as 0.5, the number of steps as 20, and the stepsize as 0.1. For the $\ell_{\infty}$ PGD attack, we set the attach intensity $\epsilon$ as 0.03, the number of steps as 20, and the stepsize as 0.01. 

\textbf{Training details:} For the classification task on MNIST, we use the cross entropy function as the loss function. For the $\ell_{2}$ PGD attack, we use the Frobenius norm of the Jacobian matrix for regularization. For the $\ell_{\infty}$ PGD attack, we use the $\ell_{1}$ norm of the Jacobian matrix for regularization. We minimize the regularized cross entropy loss function via SGD. We perform SGD for 1000 epochs, with a batch size of 1000, a momentum of 0.9 and a learning rate of 0.1. 

\begin{figure}[htbp]
\centering
\subfigure[]{
\includegraphics[width=0.48\textwidth]{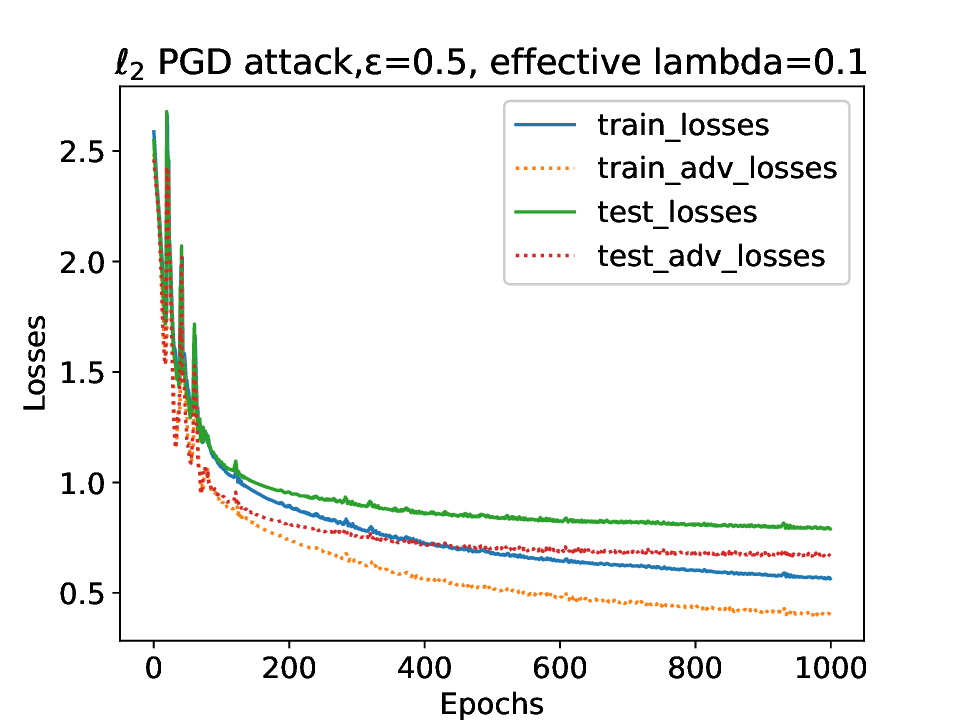}}
\subfigure[]{
\includegraphics[width=0.48\textwidth]{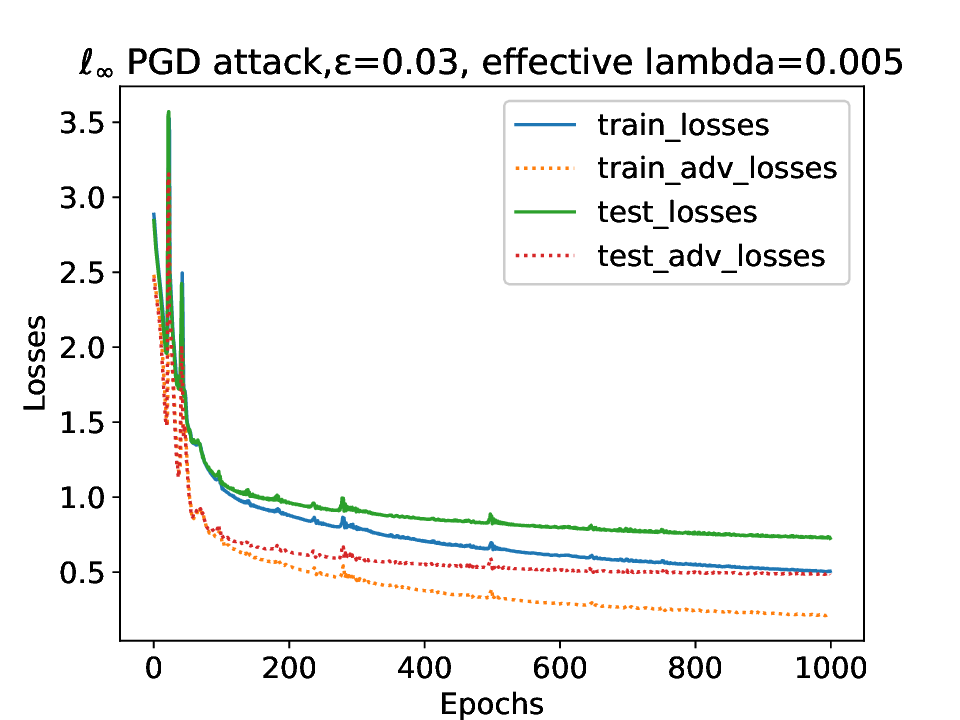}}
\caption{Comparison of Jacobian regularized loss and adversarially robust loss. We use the PGD attack loss to estimate the adversarially robust loss. In the figure legend, the losses represents the Jacobian regularized loss and the adv losses represents the adversarially robust loss.}
\label{comp}
\end{figure}

\textbf{Experimental results:} For simplicity, we define the effective regularization parameter $\tilde{\lambda} = \lambda \epsilon$ (effective lambda for short) which represents the effective Jacobian regularization strength on Jacobian norms. In the theoretical analysis, we show that Jacobian regularized loss serves as an approximate upper bound on the adversarially robust loss. Now we conduct experiments to corroborate this theoretical analysis. We plot the curves of both Jacobian regularized loss and adversarially robust loss (estimated by the PGD attack loss) in Figure \ref{comp}. On both the training and testing data, the adversarially robust losses are well bounded by the Jacobian regularized losses during the training process. Moreover, as we minimize the Jacobian regularized losses during training, the adversarially robust losses also decrease. Therefore, we demonstrate that Jacobian regularized loss indeed serves as a surrogate adversarially robust loss and that Jacobian regularized risk minimization can lead to adversarially robust risk minimization. Note that we only show the result for one choice of effective lambda, but similar results can also be obtained for other choices. In practice, the attack intensity $\epsilon$ is usually unknown and $\lambda$ should be chosen via an additional validation dataset.

More experimental results for $\ell_{2}$ PGD attack are shown in Table \ref{table-l2}, and the experimental results for $\ell_{\infty}$ PGD attack are shown in Table \ref{table-linf}.  
When the effective regularization parameter equals 0, it is just the standard empirical risk minimization without regularizations. When we increase the effective regularization parameter, the Jacobian norms $\|\mathcal{J}\|_{F}^{2}$ and $\|\mathcal{J}\|_{1,1}$ can be controlled within relatively small values. Therefore, the bounds $r_{1}$ and $r_{2}$ used in previous theoretical results can indeed be well controlled under Jacobian regularization, a fact which further implies our assumption made on $r_{1}$ and $r_{2}$ is rational.

\begin{table}[htbp] \small
\begin{tabular}{ccccccc}
\hline
 Effective Lambda & \multicolumn{2}{c}{0}   & \multicolumn{2}{c}{0.01} & \multicolumn{2}{c}{0.1} \\ 
Jacobian Norm           & \multicolumn{2}{c}{1280} & \multicolumn{2}{c}{10.3}   & \multicolumn{2}{c}{3.1}   \\ \hline
Types  & Standard    & Robust    & Standard     & Robust    & Standard     & Robust    \\ 
Training        & 100\%       & 81.8\%      & 100\%        & 100\%     & 99.4\%        & 98.9\%     \\ 
Testing         & 89.5\%        & 69.2\%      & 93.3\%         & 86.7\%      & 93.6\%         & 87.6\%      \\ 
Gap      & 10.5\%            & 12.6\%          &  6.7\%            &  13.3\%         & 5.8\%             & 11.3\%          \\ \hline
\end{tabular}
\caption{Experimental results for $\ell_{2}$ PGD attack. The $\ell_{2}$ attack intensity $\epsilon$ is 0.5.}
\label{table-l2}
\end{table}

\begin{table}[htbp] \small
\begin{tabular}{ccccccc}
\hline
Effective Lambda   & \multicolumn{2}{c}{0}   & \multicolumn{2}{c}{0.001} & \multicolumn{2}{c}{0.005} \\ 
Jacobian Norm           & \multicolumn{2}{c}{2289} & \multicolumn{2}{c}{138.9}   & \multicolumn{2}{c}{76.7}   \\ \hline
Types  & Standard    & Robust    & Standard     & Robust    & Standard     & Robust    \\ 
Training        & 100\%       & 84.7\%      & 100\%        & 100\%     & 100\%        & 99.7\%     \\ 
Testing         & 89.5\%        & 70.3\%      & 92.7\%         & 87.2\%      & 92.8\%         & 88.7\%      \\ 
Gap      & 10.5\%            & 14.4\%          & 7.3\%             & 12.8\%          & 7.2\%          &11.0\%           \\ \hline
\end{tabular}
\caption{Experimental results for $\ell_{\infty}$ PGD attack. The $\ell_{\infty}$ attack intensity $\epsilon$ is 0.03.}
\label{table-linf}
\end{table}

From Table \ref{table-l2} and Table \ref{table-linf}, overall, Jacobian regularization can indeed decrease generalization gap and improve robust generalization against both $\ell_{2}$ and $\ell_{\infty}$ attacks. In Table \ref{table-l2}, the robust generalization gap for standard training with effective regularization parameter being 0 is 12.6\%, even a bit lower than that of effective regularization parameter being 0.01. This is not contradictory because in the standard training, the robust training accuracy can be low. In addition, as pointed by Theorem \ref{rademacher}, we can also observe from the results that reducing the Jacobian norms can also improve the standard generalization. In the standard training, the Jacobian norms can be very large while the standard generalization is not so bad, this phenomenon is because that models unstable to perturbation can also generalize in the standard setting.  

Robust generalization also has relations with standard generalization. Even in the standard setting, the data contain random noise. Improving robustness to adversarial attacks via Jacobian regularization can make model more resistant to the random noise in the standard setting. In general, the random noise in standard setting is less than the adversarial perturbations, and thus Jacobian regularization is more effective for the robust generalization. This phenomenon can also be observed from the results that the standard generalization barely increases while the robust generalization still increases when we increase the effective regularization parameter.


\section{Conclusion}
In this study, we demonstrate that standard empirical risk minimization with Jacobian regularization can serve as a surrogate for robust adversarial training. We show that Jacobian regularization can improve both standard generalization and robust generalization from theoretical and empirical aspects. 
Our results can motivate both the theoretical and empirical research to understand the adversarially robust generalization.


\section*{Acknowledgments}

Dongya Wu'work was supported in part by he National Natural Science Foundation of China (62103329). Xin Li's work was supported in part by the National Natural Science Foundation of China (12201496).

\appendix
\section{Proofs}

\begin{proof}[Proof of Lemma \ref{lemma-approximation}]
For the $\ell_{2}$ adversarial attack $\|x'-x\|_{2} \leq \epsilon$,
\begin{equation*}
\begin{aligned}
\ell(f(x),y) + \nabla_{x}\ell(f(x),y)^{\top} (x^{*}-x) &\leq \ell(f(x),y) + \| \nabla_{x}\ell(f(x),y) \|_{2} \|x^{*}-x\|_{2} \\
&\leq \ell(f(x),y) + \epsilon \| \nabla_{x}\ell(f(x),y) \|_{2}  \\
&= \ell(f(x),y) + \epsilon \| \nabla_{x}f(x) \nabla_{f}\ell(f(x),y) \|_{2} \\
&\leq \ell(f(x),y) + \epsilon \| \nabla_{x}f(x) \|_{\text{op}} \| \nabla_{f}\ell(f(x),y) \|_{2} \\
&\leq \ell(f(x),y) + \epsilon L_{\ell_{2}} \| \nabla_{x}f(x) \|_{F} \\
&\leq \ell(f(x),y) + \frac12\epsilon + \frac12\epsilon L_{\ell_{2}}^{2} \| \nabla_{x}f(x) \|_{F}^{2}.
\end{aligned}
\end{equation*}
For the $\ell_{\infty}$ adversarial attack $\|x'-x\|_{\infty} \leq \epsilon$,
\begin{equation*}
\begin{aligned}
\ell(f(x),y) + \nabla_{x}\ell(f(x),y)^{\top} (x^{*}-x) &\leq \ell(f(x),y) + \| \nabla_{x}\ell(f(x),y) \|_{1} \|x^{*}-x\|_{\infty} \\
&\leq  \ell(f(x),y) +\epsilon \| \nabla_{x}\ell(f(x),y) \|_{1} \\
&= \ell(f(x),y) + \epsilon \| \nabla_{x}f(x) \nabla_{f}\ell(f(x),y) \|_{1} \\
&\leq \ell(f(x),y) + \epsilon \| \nabla_{x}f(x) \|_{1,1} \| \nabla_{f}\ell(f(x),y) \|_{\infty} \\
&\leq \ell(f(x),y) + \epsilon L_{\ell_{\infty}} \| \nabla_{x}f(x) \|_{1,1}.
\end{aligned}
\end{equation*}
The first inequality follows from Cauchy-Schwartz inequality for $\ell_{2}$ adversarial attack. For $\ell_{\infty}$ adversarial attack, we use the H$\ddot{o}$lder's inequality with $p=\infty, q=1$.
We also bound the spectral norm of Jacobian matrix by the Frobenius norm of Jacobian matrix, since the spectral norm involves a computationally expensive singular value decomposition when the dimension $d$ and $k$ are high. 
\end{proof}

\begin{proof}[Proof of Lemma \ref{lemma-lipschitz}]
Since the indicator function $\mathbbm{1}(y=j)$ equals 1 when the sample belongs to the $j$-th class and otherwise equals 0, and recall that $f_{j}$ is the $j$-th component of $f(x)$, the cross entropy loss function $\ell(f(x),y)$ is defined as
\begin{equation*}
\ell(f(x),y)=\sum_{i=1}^{k}\mathbbm{1}(y=i) \log \frac{\exp(f_{i})}{\sum_{j=1}^{k}\exp(f_{j})}.
\end{equation*}
Denote $\partial \ell / \partial f = [\partial \ell / \partial f_{1},\cdots,\partial \ell / \partial f_{k}]$, then it follows that $L_{\ell_{2}} = \sup \{ \| \partial \ell / \partial f \|_{2} \}$. Denote $\sum_{j}$ to be the sum from indexes 1 to $k$ and $\sum_{j\neq i}$ to be the sum from indexes 1 to $k$ without $i$. One has that the gradient of $\ell(f(x),y)$ with respect to $f_{i}$ is
\begin{equation*}
\begin{aligned}
\partial \ell / \partial f_{i} = \mathbbm{1}(y=i)\frac{\sum_{j\neq i}\exp(f_{j})}{\sum_j\exp(f_{j})} + \sum_{l \neq i} \mathbbm{1}(y=l)\frac{-\exp(f_{i})}{\sum_j\exp(f_{j})}.
\end{aligned}
\end{equation*}
Since $\| \partial \ell / \partial f\|_{2} = \sqrt{ (\partial \ell / \partial f_{1})^{2}+\cdots+(\partial \ell / \partial f_{k})^{2} }$, then for each class $c$, when $y=c$, it follows that 
\begin{equation*}
\begin{aligned}
\sum_{j}(\partial \ell / \partial f_{j})^{2} &= \frac{(\sum_{j\neq c}\exp(f_{j}))^{2}}{(\sum_j\exp(f_{j}))^{2}} + \frac{\sum_{l \neq c}(\exp(f_{l}))^{2}}{(\sum_j\exp(f_{j}))^{2}} \\
&\leq \frac{(\sum_{j\neq c}\exp(f_{j}))^{2}}{(\sum_j\exp(f_{j}))^{2}} + \frac{(\sum_{l \neq c} \exp(f_{l}))^{2}}{(\sum_j\exp(f_{j}))^{2}} \\
&= 2 \frac{(\sum_{j\neq c}\exp(f_{j}))^{2}}{(\sum_j\exp(f_{j}))^{2}} \\
&\leq 2.
\end{aligned}
\end{equation*}
Finally, one sees that $L_{\ell_{2}} = \sup \{ \| \partial \ell / \partial f \|_{2} \} \leq \sqrt{2}$. Similarly, one sees that $L_{\ell_{\infty}} = \sup \{ \| \partial \ell / \partial f \|_{\infty} \} \leq 1$.
\end{proof}

\begin{proof}[Proof of Lemma \ref{lemma-decomposition}]

\begin{equation*}
\begin{aligned}
\mathcal{R}_{S}(\hat{\ell}_{2\mathcal{F}_{r_{2}}}) &= \E_{\xi}\left[\sup\limits_{f \in \mathcal{F}_{r_{2}}} \frac{1}{n}\sum\limits_{i=1}^n \xi_{i} \hat{\ell}_{2}(f(x_{i}),y_{i}) \right] \\
&= \E_{\xi}\left[\sup\limits_{f \in \mathcal{F}_{r_{2}}} \frac{1}{n}\sum\limits_{i=1}^n \xi_{i} (\ell(f(x_{i}),y_{i}) + \frac12\lambda\epsilon L_{\ell_{2}}^{2} \| \nabla_{x}f(x) \|_{F}^{2} ) \right] \\
&\leq \E_{\xi}\left[\sup\limits_{f \in \mathcal{F}_{r_{2}}} \frac{1}{n}\sum\limits_{i=1}^n \xi_{i} \ell(f(x_{i}),y_{i}) \right] + \E_{\xi}\left[\sup\limits_{f \in \mathcal{F}_{r_{2}}} \frac{1}{n}\sum\limits_{i=1}^n \xi_{i}  \frac12\lambda\epsilon L_{\ell_{2}}^{2} \| \nabla_{x}f(x) \|_{F}^{2} \right] \\
&=\mathcal{R}_{S}(\ell_{\mathcal{F}_{r_{2}}})+\frac12\lambda\epsilon L_{\ell_{2}}^{2} \mathcal{R}_{S}(\mathcal{J}_{2\mathcal{F}_{r_{2}}}).
\end{aligned}
\end{equation*}

\begin{equation*}
\begin{aligned}
\mathcal{R}_{S}(\hat{\ell}_{\infty\mathcal{F}_{r_{1}}}) &= \E_{\xi}\left[\sup\limits_{f \in \mathcal{F}_{r_{1}}} \frac{1}{n}\sum\limits_{i=1}^n \xi_{i} \hat{\ell}_{\infty}(f(x_{i}),y_{i}) \right] \\
&= \E_{\xi}\left[\sup\limits_{f \in \mathcal{F}_{r_{1}}} \frac{1}{n}\sum\limits_{i=1}^n \xi_{i} (\ell(f(x_{i}),y_{i}) + \lambda\epsilon L_{\ell_{\infty}} \| \nabla_{x}f(x) \|_{1,1} ) \right] \\
&\leq \E_{\xi}\left[\sup\limits_{f \in \mathcal{F}_{r_{1}}} \frac{1}{n}\sum\limits_{i=1}^n \xi_{i} \ell(f(x_{i}),y_{i}) \right] + \E_{\xi}\left[\sup\limits_{f \in \mathcal{F}_{r_{1}}} \frac{1}{n}\sum\limits_{i=1}^n \xi_{i}  \lambda\epsilon L_{\ell_{\infty}} \| \nabla_{x}f(x) \|_{1,1} \right] \\
&=\mathcal{R}_{S}(\ell_{\mathcal{F}_{r_{1}}})+\lambda\epsilon L_{\ell_{\infty}} \mathcal{R}_{S}(\mathcal{J}_{1 \mathcal{F}_{r_{1}}}).
\end{aligned}
\end{equation*}
\end{proof}

\begin{proof}[Proof of Lemma \ref{lipschitz-parameter}]
Denote
\begin{equation*}
\nabla_{\theta} \ell(f_{\Theta}(x),y) = \left(  \frac{\partial \ell}{\partial \theta_{1}}\Big|_{\text{vec}}^{\top}, \cdots,  \frac{\partial \ell}{\partial \theta_{L}}\Big|_{\text{vec}}^{\top}       \right)^{\top},
\end{equation*}
where for $i=1,2,\cdots,L$, $\frac{\partial \ell}{\partial \theta_{i}}\Big|_{\text{vec}}$ is the vectorized form of the matrix $\frac{\partial \ell}{\partial \theta_{i}}$.  
Since the Lipschitz constant equals to $ \sup\{ \| \nabla_{\theta} \ell(f_{\Theta}(x),y) \|_{2}: \Theta \in \varTheta \}$, it suffice to analyze the $\ell_{2}$-norm of $\nabla_{\theta} \ell(f_{\Theta}(x),y)$ defined as 
\begin{equation*}
\| \nabla_{\theta} \ell(f_{\Theta}(x),y) \|_{2} = \sqrt{ \sum_{l=1}^{L} \| \frac{\partial \ell}{\partial \theta_{l}} \|_{F}^{2}  }.
\end{equation*}
For a deep neural network $f_\Theta(x)=\theta_L\sigma(\theta_{L-1}\cdots\sigma(\theta_1 x)\cdots)$, denote the activations of the $l$-th hidden layer as $h_{l}=\sigma(\theta_{l}\cdots\sigma(\theta_1 x)\cdots)$ with $h_{0}=x$. Then the derivatives of the $l$-th hidden layer is $h'_{l}=\sigma'(\theta_{l}\cdots\sigma(\theta_1 x)\cdots)$, where $\sigma'(\cdot)$ is the derivative of the activation function. Denote $\text{diag}(h'_{l})$ as the diagonal matrix with the diagonal elements being $h'_{l}$, according to the chain rule, it follows that 
\begin{equation*}
\frac{\partial \ell}{\partial \theta_{l}} =\left( \text{diag}(h'_{l}) \theta_{l+1}^{\top} \cdots \theta_{L-1}^{\top} \text{diag}(h'_{L-1}) \theta_{L}^{\top} \frac{\partial \ell}{\partial f} \right)  h_{l-1}^{\top}.
\end{equation*}
For the Relu activation function $\sigma(x)=\max\{0,x\}$, the derivative $\sigma'(\cdot)$ is the Step function, thus $\sigma'(\cdot) \in [0,1]$ and $h'_{l} \in [0,1]$. The Frobenius-norm of gradients is bounded layer by layer as:
\begin{equation}\label{lip-model1}
\begin{aligned}
\| \frac{\partial f}{\partial \theta_{l}} \|_{F} &= \sqrt{\text{trace}(\frac{\partial f}{\partial \theta_{l}}  \frac{\partial f}{\partial \theta_{l}}^{\top})} \\
&= \| \text{diag}(h'_{l}) \theta_{l+1}^{\top} \cdots \theta_{L-1}^{\top} \text{diag}(h'_{L-1}) \theta_{L}^{\top} \frac{\partial \ell}{\partial f} \|_{2}  \| h_{l-1} \|_{2} \\
&\leq \| \theta_{l+1}^{\top} \cdots \theta_{L-1}^{\top} \text{diag}(h'_{L-1}) \theta_{L}^{\top} \frac{\partial \ell}{\partial f} \|_{2}  \| h_{l-1} \|_{2}  \quad (\because h'_{l} \in [0,1])\\
&\leq \| \theta_{l+1} \|_{F}  \| \text{diag}(h'_{l+1}) \cdots \theta_{L-1}^{\top} \text{diag}(h'_{L-1}) \theta_{L}^{\top} \frac{\partial \ell}{\partial f} \|_{2}  \| h_{l-1} \|_{2} \\
&\leq \| h_{l-1} \|_{2} \| \frac{\partial \ell}{\partial f} \|_{2} \prod_{k=l+1}^{L} \| \theta_{k} \|_{F}  \\
&\leq L_{\ell_{2}} \| h_{l-1} \|_{2} \prod_{k=l+1}^{L} \| \theta_{k} \|_{F}.
\end{aligned}
\end{equation}
The term $\| h_{l-1} \|_{2}$ is also bounded layer by layer as:
\begin{equation}\label{lip-model2}
\begin{aligned}
\| h_{l-1} \|_{2} &= \| \sigma(\theta_{l-1}\cdots\sigma(\theta_1 x)\cdots) \|_{2} \\
&\leq \| \theta_{l-1} \sigma(\theta_{l-2} \cdots\sigma(\theta_1 x)\cdots) \|_{2} \\
&\leq \| \theta_{l-1} \|_{F}  \| \sigma(\theta_{l-2} \cdots\sigma(\theta_1 x)\cdots) \|_{2} \\
&\leq \| x \|_{2} \prod_{k=1}^{l-1} \| \theta_{k} \|_{F},
\end{aligned}
\end{equation}
where the first inequality is due to the 1-Lipschitz property of $\sigma(\cdot)$ and fact that $\sigma(0)=0$, and the last inequality is by iteration. Combining \eqref{lip-model1} and \eqref{lip-model2} yields that
\begin{equation*}
\| \frac{\partial \ell}{\partial \theta_{l}} \|_{F} \leq L_{\ell_{2}} \| x \|_{2} \prod_{k=1,k \neq l}^{L} \| \theta_{k} \|_{F}.
\end{equation*}
Combining the Frobenius-norm of gradients of all layers, we obtain that 
\begin{equation*}
\begin{aligned}
\sqrt{ \sum_{l=1}^{L} \| \frac{\partial \ell}{\partial \theta_{l}} \|_{F}^{2}  } &\leq L_{\ell_{2}} \| x \|_{2} \sqrt{ \sum_{l=1}^{L} \prod_{k=1,k \neq l}^{L} \| \theta_{k} \|_{F}^{2}} \\
&\leq L_{\ell_{2}} \| x \|_{2} \sqrt{ L \max_{l \in \{1,\dots,L \}} \prod_{k=1,k \neq l}^{L} \| \theta_{k} \|_{F}^{2}}   \\
&\leq \sqrt{L} L_{\ell_{2}} \| x \|_{2} \sqrt{ \max_{l \in \{1,\dots,L \}} \left( \frac{1}{L-1} \sum_{k=1,k \neq l}^{L} \| \theta_{k} \|_{F}^{2} \right)^{L-1}  } \\
&\leq \sqrt{L} L_{\ell_{2}} \| x \|_{2} \sqrt{ \left( \frac{1}{L-1} \sum_{k=1}^{L} \| \theta_{k} \|_{F}^{2} \right)^{L-1}  } \\
&\leq \sqrt{L} L_{\ell_{2}} \| x \|_{2} \sqrt{ \left( \frac{1}{L-1} R_{\Theta}^{2} \right)^{L-1}  } \\
&= \sqrt{L} L_{\ell_{2}} \| x \|_{2}  \left( \frac{R_{\Theta}}{\sqrt{L-1}} \right)^{L-1}.  
\end{aligned}
\end{equation*}
Furthermore, it follows that
\begin{equation*}
\begin{aligned}
& \| \ell(f_{\Theta}(x),y) - \ell(f_{\Theta'}(x),y) \|_{L_{1}(P_{n})} = \frac{1}{n} \sum_{i=1}^{n} | \ell(f_{\Theta}(x_{i}),y_{i}) - \ell(f_{\Theta'}(x_{i}),y_{i}) | \\
&\leq \frac{1}{n} \sum_{i=1}^{n} \sqrt{L} L_{\ell_{2}} \| x_{i} \|_{2} \left( \frac{R_{\Theta}}{\sqrt{L-1}} \right)^{L-1} \norm{\Theta-\Theta'}_{F}  \\
&= \sqrt{L} L_{\ell_{2}}  \frac{1}{n} \sum_{i=1}^{n} \| x_{i} \|_{2} \left( \frac{R_{\Theta}}{\sqrt{L-1}} \right)^{L-1}   \norm{\Theta-\Theta'}_{F}.
\end{aligned}
\end{equation*}
The proof is complete.
\end{proof}

\begin{proof}[Proof of Lemma \ref{lipschitz-jacobian}]
Recall that the derivatives of the $l$-th hidden layer is $h'_{l}=\sigma'(\theta_{l}\cdots\sigma(\theta_1 x)\cdots)$, then it follows from the chain rule that the gradients with respect to inputs are equal to
\begin{equation*}
\nabla_{x}f(x) = \theta_{1}^{\top} \text{diag}(h'_{1}) \theta_{2}^{\top} \cdots \theta_{L-1}^{\top} \text{diag}(h'_{L-1}) \theta_{L}^{\top}.
\end{equation*}
The gradients of the $j$-th output of $f(x)$ is
\begin{equation*}
\nabla_{x}f_{j}(x) = \theta_{1}^{\top} \text{diag}(h'_{1}) \theta_{2}^{\top} \cdots \theta_{L-1}^{\top} \text{diag}(h'_{L-1}) \theta_{Lj\cdot}^{\top},
\end{equation*}
where we use $\theta_{Lj\cdot}$ to denote the $j$-th row of $\theta_{L}$. Then it holds that
\begin{equation*}
\| \nabla_{x}f(x) \|_{F}^{2} = \sum_{j=1}^{k} \| \nabla_{x}f_{j}(x) \|_{2}^{2}\quad \text{and}\quad  \| \nabla_{x}f(x) \|_{1,1} = \sum_{j=1}^{k} \| \nabla_{x}f_{j}(x) \|_{1}.
\end{equation*}
We firstly calculate the derivatives of $\| \nabla_{x}f(x) \|_{F}^{2}$. Since $\nabla_{x}f_{j}(x)$ is a column vector, it follows that
\begin{equation*}
\begin{aligned}
&\| \nabla_{x}f_{j}(x) \|_{2}^{2} =  \nabla_{x}f_{j}^{\top}(x)  \nabla_{x}f_{j}(x) \\  &=\theta_{Lj\cdot} \text{diag}(h'_{L-1}) \theta_{L-1} \cdots \theta_{2} \text{diag}(h'_{1}) \theta_{1}   \theta_{1}^{\top} \text{diag}(h'_{1}) \theta_{2}^{\top} \cdots \theta_{L-1}^{\top} \text{diag}(h'_{L-1}) \theta_{Lj\cdot}^{\top}.
\end{aligned}
\end{equation*}
Next, we calculate the derivatives of $\| \nabla_{x}f(x) \|_{F}^{2}$ with respect to parameters to characterize the Lipschitz property with respect to parameters.
For parameter $\theta_{L}$, since $\theta_{Lj\cdot}$ only exists in $\nabla_{x}f_{j}(x)$,  one has that
\begin{equation*}
\begin{aligned}
&\frac{\partial \| \nabla_{x}f(x) \|_{F}^{2}}{\partial \theta_{Lj\cdot}} = \frac{\partial \| \nabla_{x}f_{j}(x) \|_{2}^{2}} {\partial \theta_{Lj\cdot}} \\  
&=2 \theta_{Lj\cdot} \text{diag}(h'_{L-1}) \theta_{L-1} \cdots \theta_{2} \text{diag}(h'_{1}) \theta_{1}   \theta_{1}^{\top} \text{diag}(h'_{1}) \theta_{2}^{\top} \cdots \theta_{L-1}^{\top} \text{diag}(h'_{L-1}) .
\end{aligned}
\end{equation*}
Then, the norm of derivatives with respect to parameter $\theta_{Lj\cdot}$ is bounded by
\begin{equation*}
\begin{aligned}
\left \| \frac{\partial \| \nabla_{x}f(x) \|_{F}^{2}}{\partial \theta_{Lj\cdot}} \right \|_{2} &=2 \| \theta_{Lj\cdot} \text{diag}(h'_{L-1}) \theta_{L-1} \cdots \theta_{1}   \theta_{1}^{\top}  \cdots \theta_{L-1}^{\top} \text{diag}(h'_{L-1}) \|_{2} \\
&\leq 2 \| \theta_{Lj\cdot} \text{diag}(h'_{L-1}) \theta_{L-1} \cdots \theta_{1}   \theta_{1}^{\top}  \cdots \theta_{L-1}^{\top} \|_{2} \\
&\leq 2 \| \theta_{Lj\cdot} \text{diag}(h'_{L-1}) \theta_{L-1} \cdots \theta_{1}   \theta_{1}^{\top}  \cdots  \text{diag}(h'_{L-2}) \|_{2} \|\theta_{L-1}\|_{F} \\
&\leq 2 \|\theta_{Lj\cdot}\|_{2} \prod_{l=1}^{L-1} \| \theta_{l} \|_{F}^{2},
\end{aligned}
\end{equation*}
and thus it follows directly that
\begin{equation*}
\begin{aligned}
\left \| \frac{\partial \| \nabla_{x}f(x) \|_{F}^{2}}{\partial \theta_{L}} \right \|_{F}^{2} &= \sum_{j=1}^{k}\left \| \frac{\partial \| \nabla_{x}f(x) \|_{F}^{2}}{\partial \theta_{Lj\cdot}} \right \|_{2}^{2} \\
&\leq \sum_{j=1}^{k} 4 \|\theta_{Lj\cdot}\|_{2}^{2} \prod_{l=1}^{L-1} \| \theta_{l} \|_{F}^{4} = 4\|\theta_{L}\|_{F}^{2} \prod_{l=1}^{L-1} \| \theta_{l} \|_{F}^{4}.
\end{aligned}
\end{equation*}
It is worth noting that other parameters $\theta_{L-1},\cdots,\theta_{1}$ also exist in $\text{diag}(h'_{L-1}),\cdots,\text{diag}(h'_{1})$. Since we assume that there are no pre-activations $h_{l_{pre}} = (\theta_{l}\cdots\sigma(\theta_1 x)\cdots)$ which are exactly 0, the second derivatives $h''_{l}=\sigma''(\theta_{l}\cdots\sigma(\theta_1 x)\cdots)$ is 0 for Relu activation function, and thus the derivatives $\partial \text{diag}(h'_{l}) / \partial \theta_{j} = 0$. Then the derivatives of other parameters satisfy 
\begin{equation*}
\begin{aligned}
&\frac{\partial \| \nabla_{x}f(x) \|_{F}^{2}}{\partial \theta_{l}} = \sum_{j=1}^{k}\frac{\partial \| \nabla_{x}f_{j}(x) \|_{2}^{2}} {\partial \theta_{l}} \\  
&=2\sum_{j=1}^{k}  \text{diag}(h'_{l}) \theta_{l+1}^{\top} \cdots  \theta_{Lj\cdot}^{\top} \theta_{Lj\cdot} \text{diag}(h'_{L-1}) \theta_{L-1} \cdots  \theta_{1}  \theta_{1}^{\top} \cdots \theta_{l-1}^{\top} \text{diag}(h'_{l-1}) \\
&=2\text{diag}(h'_{l}) \theta_{l+1}^{\top} \cdots  \theta_{L}^{\top} \theta_{L} \text{diag}(h'_{L-1}) \theta_{L-1} \cdots  \theta_{1}  \theta_{1}^{\top} \cdots \theta_{l-1}^{\top} \text{diag}(h'_{l-1}). \\
\end{aligned}
\end{equation*}
Then, the norm of derivatives with respect to parameter $\theta_{l}$ satisfy
\begin{equation*}
\begin{aligned}
\left \| \frac{\partial \| \nabla_{x}f(x) \|_{F}^{2}}{\partial \theta_{l}} \right \|_{F} &=2 \| \text{diag}(h'_{l}) \theta_{l+1}^{\top} \cdots  \theta_{L}^{\top} \theta_{L} \cdots  \theta_{1}  \theta_{1}^{\top} \cdots \theta_{l-1}^{\top} \text{diag}(h'_{l-1}) \|_{F} \\
&\leq 2 \| \text{diag}(h'_{l}) \theta_{l+1}^{\top} \cdots  \theta_{L}^{\top} \theta_{L} \cdots  \theta_{1}  \theta_{1}^{\top} \cdots \theta_{l-1}^{\top} \|_{F} \| \text{diag}(h'_{l-1}) \|_{\text{op}} \\
&\leq 2 \| \text{diag}(h'_{l}) \theta_{l+1}^{\top} \cdots  \theta_{L}^{\top} \theta_{L} \cdots  \theta_{1}  \theta_{1}^{\top} \cdots \theta_{l-1}^{\top} \|_{F} \\
&\leq 2 \| \text{diag}(h'_{l}) \theta_{l+1}^{\top} \cdots  \theta_{L}^{\top} \theta_{L} \cdots  \theta_{1}  \theta_{1}^{\top} \cdots \text{diag}(h'_{l-2}) \|_{F} \|\theta_{l-1}\|_{\text{op}} \\
&\leq 2 \| \text{diag}(h'_{l}) \theta_{l+1}^{\top} \cdots  \theta_{L}^{\top} \theta_{L} \cdots  \theta_{1}  \theta_{1}^{\top} \cdots \text{diag}(h'_{l-2}) \|_{F} \|\theta_{l-1}\|_{F} \\
&\leq 2 \|\theta_{l}\|_{F} \prod_{j=1,j\neq l}^{L} \| \theta_{j} \|_{F}^{2}.
\end{aligned}
\end{equation*}
Thus, for all $l=1,\cdots,L$, it follows that
\begin{equation*}
\left \| \frac{\partial \| \nabla_{x}f(x) \|_{F}^{2}}{\partial \theta_{l}} \right \|_{F}^{2} \leq 4 \|\theta_{l}\|_{F}^{2} \prod_{j=1,j\neq l}^{L} \| \theta_{j} \|_{F}^{4}. 
\end{equation*}
Similarly, denote the vectorized form of derivatives as 
\begin{equation*}
\nabla_{\theta}\| \nabla_{x}f(x) \|_{F}^{2} = \left(  \frac{\partial \| \nabla_{x}f(x) \|_{F}^{2}}{\partial \theta_{1}}\Big|_{\text{vec}}^{\top}, \cdots,  \frac{\partial \| \nabla_{x}f(x) \|_{F}^{2}}{\partial \theta_{L}}\Big|_{\text{vec}}^{\top}       \right)^{\top}.
\end{equation*}
Then the Lipschitz property with respect to parameters can be bounded by
\begin{equation*}
\begin{aligned}
\| \nabla_{\theta}\| \nabla_{x}f(x) \|_{F}^{2} \|_{2} &= \sqrt{ \sum_{l=1}^{L} \left \| \frac{\partial \| \nabla_{x}f(x) \|_{F}^{2}}{\partial \theta_{l}} \right \|_{F}^{2}  } \\
&\leq \sqrt{ \sum_{l=1}^{L} 4 \|\theta_{l}\|_{F}^{2} \prod_{j=1,j\neq l}^{L} \| \theta_{j} \|_{F}^{4}  } \\
&\leq \sqrt{ L \max_{l \in \{1,\dots,L \}} 4 \|\theta_{l}\|_{F}^{2} \prod_{j=1,j\neq l}^{L} \| \theta_{j} \|_{F}^{4}  } \\
&\leq \sqrt{ 4L \max_{l \in \{1,\dots,L \}} \left(\frac{1}{2L-1} (\|\theta_{l}\|_{F}^{2}+ \sum_{j=1,j\neq l}^{L} 2\| \theta_{j} \|_{F}^{2}) \right)^{2L-1} } \\
&\leq \sqrt{ 4L  \left(\frac{1}{2L-1} (2\|\theta_{l}\|_{F}^{2}+ \sum_{j=1,j\neq l}^{L} 2\| \theta_{j} \|_{F}^{2}) \right)^{2L-1} } \\
&\leq2\sqrt{L} \left( \sqrt{\frac{2}{2L-1} R_{\Theta}^{2}} \right)^{2L-1} \\
&\leq 2\sqrt{L} \left(\frac{R_{\Theta}}{\sqrt{L-1}}\right)^{2L-1}.
\end{aligned}
\end{equation*}
Finally, it follows that
\begin{equation*}
\begin{aligned}
\| \|\nabla_{x}f_{\Theta}(x)\|^{2}_{F} - \|\nabla_{x}f_{\Theta'}(x)\|^{2}_{F} \|_{L_{1}(P_{n})} &= \frac1n \sum_{i=1}^{n} \left| \|\nabla_{x}f_{\Theta}(x_{i})\|^{2}_{F} - \|\nabla_{x}f_{\Theta'}(x_{i})\|^{2}_{F} \right| \\
&\leq 2\sqrt{L} \left(\frac{R_{\Theta}}{\sqrt{L-1}}\right)^{2L-1} \norm{\Theta-\Theta'}_{F} .
\end{aligned}
\end{equation*}
Next we calculate the derivatives of $\| \nabla_{x}f(x) \|_{1,1}$. Recall that the gradients of the $j$-th output of $f(x)$ is
\begin{equation*}
\nabla_{x}f_{j}(x) = \theta_{1}^{\top} \text{diag}(h'_{1}) \theta_{2}^{\top} \cdots \theta_{L-1}^{\top} \text{diag}(h'_{L-1}) \theta_{Lj\cdot}^{\top}.
\end{equation*}
Then it holds that
\begin{equation*}
 \| \nabla_{x}f(x) \|_{1,1} = \sum_{j=1}^{k} \| \nabla_{x}f_{j}(x) \|_{1} = \sum_{j=1}^{k} \| \theta_{1}^{\top} \text{diag}(h'_{1}) \theta_{2}^{\top} \cdots \theta_{L-1}^{\top} \text{diag}(h'_{L-1}) \theta_{Lj\cdot}^{\top} \|_{1}.
\end{equation*}
For parameter $\theta_{L}$, since $\theta_{Lj\cdot}$ only exists in $\nabla_{x}f_{j}(x)$, one has that
\begin{equation*}
\begin{aligned}
&\frac{\partial \| \nabla_{x}f(x) \|_{1,1}}{\partial \theta_{Lj\cdot}} = \frac{\partial \| \nabla_{x}f_{j}(x) \|_{1}} {\partial \theta_{Lj\cdot}} \\  
&= \text{diag}(h'_{L-1}) \theta_{L-1} \cdots \theta_{2} \text{diag}(h'_{1}) \theta_{1}  \text{sign}(\nabla_{x}f_{j}(x))  .
\end{aligned}
\end{equation*}
Then, the norm of derivatives with respect to parameter $\theta_{Lj\cdot}$ is bounded by
\begin{equation*}
\begin{aligned}
\left \| \frac{\partial \| \nabla_{x}f(x) \|_{1,1}}{\partial \theta_{Lj\cdot}} \right \|_{2} &= \| \text{diag}(h'_{L-1}) \theta_{L-1} \cdots \theta_{2} \text{diag}(h'_{1}) \theta_{1}  \text{sign}(\nabla_{x}f_{j}(x)) \|_{2} \\
&\leq \| \theta_{L-1} \cdots \theta_{2} \text{diag}(h'_{1}) \theta_{1}  \text{sign}(\nabla_{x}f_{j}(x)) \|_{2} \\
&\leq \|\theta_{L-1}\|_{F} \| \text{diag}(h'_{L-2}) \cdots \theta_{2} \text{diag}(h'_{1}) \theta_{1}  \text{sign}(\nabla_{x}f_{j}(x)) \|_{2} \\
&\leq \| \text{sign}(\nabla_{x}f_{j}(x)) \|_{2} \prod_{l=1}^{L-1} \| \theta_{l} \|_{F} \\
&=\sqrt{d}  \prod_{l=1}^{L-1} \| \theta_{l} \|_{F}.
\end{aligned}
\end{equation*}
and thus it follows directly that
\begin{equation*}
\begin{aligned}
\left \| \frac{\partial \| \nabla_{x}f(x) \|_{1,1}}{\partial \theta_{L}} \right \|_{F}^{2} &= \sum_{j=1}^{k}\left \| \frac{\partial \| \nabla_{x}f(x) \|_{1,1}}{\partial \theta_{Lj\cdot}} \right \|_{2}^{2} \leq kd \prod_{l=1}^{L-1} \| \theta_{l} \|_{F}^{2} .
\end{aligned}
\end{equation*}
For other parameters $\theta_{L-1},\cdots,\theta_{1}$, it holds that
\begin{equation*}
\begin{aligned}
&\frac{\partial \| \nabla_{x}f(x) \|_{1,1}}{\partial \theta_{l}} = \sum_{j=1}^{k}\frac{\partial \| \nabla_{x}f_{j}(x) \|_{1}} {\partial \theta_{l}} \\  
&=\sum_{j=1}^{k} \text{diag}(h'_{l}) \theta_{l+1}^{\top}  \cdots \text{diag}(h'_{L-1}) \theta_{Lj\cdot}^{\top} \text{sign}(\nabla_{x}f_{j}(x))^{\top} \theta_{1}^{\top}  \cdots \theta_{l-1}^{\top}  \text{diag}(h'_{l-1}) \\
&= \text{diag}(h'_{l}) \theta_{l+1}^{\top}  \cdots \text{diag}(h'_{L-1}) \theta_{L}^{\top} \text{sign}(\nabla_{x}f(x))^{\top} \theta_{1}^{\top}   \cdots \theta_{l-1}^{\top}  \text{diag}(h'_{l-1}),
\end{aligned}
\end{equation*}
and thus
\begin{equation*}
\begin{aligned}
\left \| \frac{\partial \| \nabla_{x}f(x) \|_{1,1}}{\partial \theta_{l}} \right \|_{F} &= \| \text{diag}(h'_{l}) \theta_{l+1}^{\top}  \cdots \theta_{L}^{\top} \text{sign}(\nabla_{x}f(x))^{\top} \theta_{1}^{\top}  \cdots \theta_{l-1}^{\top}  \text{diag}(h'_{l-1}) \|_{F} \\
&\leq \| \text{diag}(h'_{l}) \theta_{l+1}^{\top}  \cdots \theta_{L}^{\top} \text{sign}(\nabla_{x}f(x))^{\top} \theta_{1}^{\top}  \cdots \theta_{l-1}^{\top} \|_{F} \|  \text{diag}(h'_{l-1}) \|_{\text{op}} \\
&\leq \| \text{diag}(h'_{l}) \theta_{l+1}^{\top}  \cdots \theta_{L}^{\top} \text{sign}(\nabla_{x}f(x))^{\top} \theta_{1}^{\top}  \cdots \theta_{l-1}^{\top} \|_{F} \\
&\leq \| \text{diag}(h'_{l}) \theta_{l+1}^{\top}  \cdots \theta_{L}^{\top} \text{sign}(\nabla_{x}f(x))^{\top} \theta_{1}^{\top}  \cdots \text{diag}(h'_{l-2}) \|_{F} \|\theta_{l-1}^{\top} \|_{\text{op}} \\
&\leq \| \text{diag}(h'_{l}) \theta_{l+1}^{\top}  \cdots \theta_{L}^{\top} \text{sign}(\nabla_{x}f(x))^{\top} \theta_{1}^{\top}  \cdots \text{diag}(h'_{l-2}) \|_{F} \|\theta_{l-1}^{\top} \|_{F} \\
&\leq \| \text{sign}(\nabla_{x}f(x)) \|_{F} \prod_{j=1,j\neq l}^{L} \|\theta_{j}\|_{F} \\
&\leq \sqrt{kd} \prod_{j=1,j\neq l}^{L} \|\theta_{j}\|_{F}.
\end{aligned}
\end{equation*}
Hence one has that
\begin{equation*}
\left \| \frac{\partial \| \nabla_{x}f(x) \|_{1,1}}{\partial \theta_{l}} \right \|_{F}^{2} \leq kd \prod_{j=1,j\neq l}^{L} \|\theta_{j}\|_{F}^{2}.
\end{equation*}
Similarly, denote the vectorized form of derivatives as 
\begin{equation*}
\nabla_{\theta}\| \nabla_{x}f(x) \|_{1,1} = \left(  \frac{\partial \| \nabla_{x}f(x) \|_{1,1}}{\partial \theta_{1}}\Big|_{\text{vec}}^{\top}, \cdots,  \frac{\partial \| \nabla_{x}f(x) \|_{1,1}}{\partial \theta_{L}}\Big|_{\text{vec}}^{\top}       \right)^{\top}.
\end{equation*}
Then the Lipschitzness with respect to parameters can be bounded by
\begin{equation*}
\begin{aligned}
\| \nabla_{\theta}\| \nabla_{x}f(x) \|_{1,1} \|_{2} &= \sqrt{ \sum_{l=1}^{L} \left \| \frac{\partial \| \nabla_{x}f(x) \|_{1,1}}{\partial \theta_{l}} \right \|_{F}^{2}  } \\
&\leq \sqrt{ \sum_{l=1}^{L} kd \prod_{j=1,j\neq l}^{L} \|\theta_{j}\|_{F}^{2}  } \\
&\leq \sqrt{ Lkd \max_{l \in \{1,\dots,L \}}  \prod_{j=1,j\neq l}^{L} \| \theta_{j} \|_{F}^{2}  } \\
&\leq \sqrt{ Lkd \max_{l \in \{1,\dots,L \}} \left(\frac{1}{L-1} \sum_{j=1,j\neq l}^{L} \| \theta_{j} \|_{F}^{2} \right)^{L-1} } \\
&\leq \sqrt{ Lkd  \left(\frac{1}{L-1} \sum_{j=1}^{L} \| \theta_{j} \|_{F}^{2} \right)^{L-1} } \\
&\leq \sqrt{Lkd} \left(\frac{R_{\Theta}}{\sqrt{L-1}}\right)^{L-1}.
\end{aligned}
\end{equation*}
Finally, it follows that
\begin{equation*}
\begin{aligned}
\| \|\nabla_{x}f_{\Theta}(x)\|_{1,1} - \|\nabla_{x}f_{\Theta'}(x)\|_{1,1} \|_{L_{1}(P_{n})} &= \frac1n \sum_{i=1}^{n} \left| \|\nabla_{x}f_{\Theta}(x_{i})\|_{1,1} - \|\nabla_{x}f_{\Theta'}(x_{i})\|_{1,1} \right| \\
&\leq \sqrt{Lkd} \left(\frac{R_{\Theta}}{\sqrt{L-1}}\right)^{L-1} \norm{\Theta-\Theta'}_{F} .
\end{aligned}
\end{equation*}
The proof is complete.
\end{proof}

\begin{proof}[Proof of Lemma \ref{lemma-derivative}]
For a deep neural network $f(x)=\theta_L\sigma(\theta_{L-1}\cdots\sigma(\theta_1 x)\cdots)$, denote the activations of the $l$-th hidden layer as $h_{l}=\sigma(\theta_{l}\cdots\sigma(\theta_1 x)\cdots)$ with $h_{0}=x$. Then the derivatives of the $l$-th hidden layer is $h'_{l}=\sigma'(\theta_{l}\cdots\sigma(\theta_1 x)\cdots)$, where $\sigma'(\cdot)$ is the derivative of the activation function. When $\sigma(x)=\text{Relu}(x)$, the derivative is the step function $\sigma'(x)=\text{Step}(x)$, where $\text{Step}(x)=1$ when $x>0$ and $\text{Step}(x)=0$ when $x\leq0$. Denote $\text{diag}(h'_{l})$ as the diagonal matrix with the diagonal elements being $h'_{l}$,  then it follows that
\begin{equation*}
\begin{aligned}
\nabla_{x}f(x) &= \theta_{1}^{\top} \text{diag}(h'_{1}) \theta_{2}^{\top} \cdots \theta_{L-1}^{\top} \text{diag}(h'_{L-1}) \theta_{L}^{\top}, \\
\nabla_{x}f(x)^{ \top} x &=\theta_{L} \text{diag}(h'_{L-1}) \theta_{L-1} \cdots \theta_{2}  \text{diag}(h'_{1}) \theta_{1} x.
\end{aligned}
\end{equation*}
Note that $\text{diag}(h'_{1}) \theta_{1} x = \text{diag}(\sigma'(\theta_{1} x)) \theta_{1} x=\sigma(\theta_{1} x)$. Applying this rule recursively, we obtian that $f(x) = \nabla_{x}f(x)^{ \top} x$.
\end{proof}

\begin{proof}[Proof of Lemma \ref{lemma-variance}]
It follows from Lemma \ref{lemma-derivative} that
\begin{equation*}
\begin{aligned}
P_{n}\|f\|_{2} &=\frac1n \sum_{i=1}^{n} \| \nabla_{x}f(x_{i})^{ \top} x_{i} \|_{2} \\
&\leq \frac1n \sum_{i=1}^{n} \| \nabla_{x}f(x_{i})\|_{\text{op}} \| x_{i} \|_{2}  \\
&\leq \frac1n \sum_{i=1}^{n} \| \nabla_{x}f(x_{i})\|_{F} \| x_{i} \|_{2}  \\
&\leq \sqrt{\frac1n \sum_{i=1}^{n} \| \nabla_{x}f(x_{i})\|_{F}^{2} } \sqrt{\frac1n \sum_{i=1}^{n} \| x_{i} \|_{2}^{2} } \\
&\leq \sqrt{r_{2}} \sqrt{\frac1n \sum_{i=1}^{n} \| x_{i} \|_{2}^{2} }.
\end{aligned}
\end{equation*}

\begin{equation*}
\begin{aligned}
P_{n}\|f\|_{1}=\frac1n \sum_{i=1}^{n} \sum_{j=1}^{k} | f_{j}(x_{i}) | &=\frac1n \sum_{i=1}^{n} \sum_{j=1}^{k} |\nabla_{x}f_{j}(x_{i})^{ \top} x_{i}| \\
&\leq \frac1n \sum_{i=1}^{n} \sum_{j=1}^{k}  \| \nabla_{x}f_{j}(x_{i}) \|_{1} \|x_{i}\|_{\infty} \\
&= \frac1n \sum_{i=1}^{n} \| \nabla_{x}f(x_{i}) \|_{1,1} \|x_{i}\|_{\infty} \\
&\leq \frac1n \sum_{i=1}^{n} \| \nabla_{x}f(x_{i}) \|_{1,1} R_{x} \\
&\leq r_{1}R_{x}.
\end{aligned}
\end{equation*}
\end{proof}

\begin{proof}[Proof of Lemma \ref{lemma-variance-loss}]
Since when all the parameters are zeros, the outputs of the function $f_{\Theta_{0}}(x)$ are also zeros. Then for $f \in \mathcal{F}_{r_{2}}$, it follows that
\begin{equation*}
\begin{aligned}
| \ell(f(x),y) - \ell(f_{\Theta_{0}}(x),y) | & \leq L_{\ell_{2}} \| f(x)-f_{\Theta_{0}}(x) \|_{2} = L_{\ell_{2}} \| f(x) \|_{2}, \\
\| \ell(f(x),y) - \ell(f_{\Theta_{0}}(x),y) \|_{L_{1}(P_{n})} & \leq L_{\ell_{2}} P_{n}\| f\|_{2} \leq L_{\ell_{2}}  \sqrt{r_{2}} \sqrt{\frac1n \sum_{i=1}^{n} \| x_{i} \|_{2}^{2} }.
\end{aligned}
\end{equation*}
Similarly, for $f \in \mathcal{F}_{r_{1}}$, it follows that
\begin{equation*}
\begin{aligned}
&| \ell(f(x),y) - \ell(f_{\Theta_{0}}(x),y) |  \leq L_{\ell_{\infty}} \| f(x)-f_{\Theta_{0}}(x) \|_{1} = L_{\ell_{\infty}} \| f(x) \|_{1}, \\
&\| \ell(f(x),y) - \ell(f_{\Theta_{0}}(x),y) \|_{L_{1}(P_{n})}  \leq L_{\ell_{\infty}} P_{n}\| f\|_{1} \leq L_{\ell_{\infty}} r_{1}R_{x}.
\end{aligned}
\end{equation*}
The proof is complete.
\end{proof}

\begin{proof}[Proof of Theorem \ref{rademacher}]
For the loss function class $\ell_{\mathcal{F}_{r_{2}}}$, the empirical Rademacher complexity can be bounded via Dudley's integral as
\begin{equation}\label{dudley}
\mathcal{R}_{S}(\ell_{\mathcal{F}_{r_{2}}}) \leq 12\int_{0}^{\delta_{\text{sup}}} \sqrt{ \frac{\log \mathcal{N}(\delta, \ell_{\mathcal{F}_{r_{2}}}, L_{1}(P_{n}) )}{n} }d\delta.
\end{equation}
We here use the $L_{1}(P_{n})$ metric instead of the $L_{2}(P_{n})$ metric in Dudley's integral. The proofs of Dudley's integral are similar except that one should use H$\ddot{o}$lder's inequality with $p=\infty, q=1$ under the $L_{1}(P_{n})$ metric and use Cauchy-Schwartz inequality under the $L_{2}(P_{n})$ metric. By Lemma \ref{lemma-variance-loss} and Lemma \ref{lemma-lipschitz}, one sees that the integral can be terminated at
$\delta_{\text{sup}} =  \sqrt{2r_{2}} \sqrt{1/n \sum_{i=1}^{n} \| x_{i} \|_{2}^{2} }$.
The covering number of the loss function space $\mathcal{N}(\delta, \ell_{\mathcal{F}_{r_{2}}}, L_{1}(P_{n}) )$ can be bounded by the covering number of the parameter space $\mathcal{N}(\delta_{\theta}, \varTheta, \norm{\cdot}_{F} )$ via the Lipschitz property with respect to parameters.
Specifically, let $\mathcal{C}_{\varTheta}$ be a $\delta_{\theta}$-cover of $\varTheta$, such that for each $\Theta \in \varTheta$, there exists $\Theta' \in \mathcal{C}_{\varTheta}$ satisfying $\norm{\Theta' - \Theta}_{F} \leq \delta_{\theta}$. According to Lemma \ref{lipschitz-parameter}, one has that
\begin{equation*}
\| \ell(f_{\Theta}(x),y) - \ell(f_{\Theta'}(x),y) \|_{L_{1}(P_{n})} \leq L^{\ell}_{1P_{n}} \norm{\Theta-\Theta'}_{F}  \leq  \delta_{\theta} L^{\ell}_{1P_{n}}.
\end{equation*}
Thus the function set $\mathcal{C}_{\ell_{\mathcal{F}_{r_{2}}}}=\{\ell(f_{\Theta'}(x),y):\Theta' \in \mathcal{C}_{\varTheta}  \}$ is a $\delta$-cover of $\ell_{\mathcal{F}_{r_{2}}}$ with $\delta=\delta_{\theta}L^{\ell}_{1P_{n}}$ and it holds that
\begin{equation*}
\mathcal{N}(\delta, \ell_{\mathcal{F}_{r_{2}}}, L_{1}(P_{n}) ) \leq  \mathcal{N}(\delta / L^{\ell}_{1P_{n}}, \varTheta, \norm{\cdot}_{F} ).
\end{equation*}
On the other hand, for the parameter set $\Theta \in \varTheta$, let $\Theta_{\text{vec}}$ denote the vector formed by the vectorized form of each parameter matrix arranging one by one. Denote the total number of parameters as $P$. By assumption, $\| \Theta_{\text{vec}} \|_{2} = \norm{\Theta}_{F} \leq R_{\Theta}$, the transformed parameter set is
$ \mathbb{B}_{2,R_{\Theta}}^{P} = \{\theta \in \mathbb{R}^{P}:  \| \theta \|_{2} \leq R_{\Theta}\} $ and 
\begin{equation*}
\mathcal{N}(\delta_{\theta}, \varTheta, \norm{\cdot}_{F} ) = \mathcal{N}(\delta_{\theta}, \mathbb{B}_{2,R_{\Theta}}^{P}, \| \cdot \|_{2} ) \leq \left( \frac{3R_{\Theta}}{\delta_{\theta}}\right)^{P}.
\end{equation*}
Then the Dudley's integral \eqref{dudley} is bounded as follows
\begin{equation*}
\begin{aligned}
\mathcal{R}_{S}(\ell_{\mathcal{F}_{r_{2}}}) 
&\leq 12\sqrt{\frac{P}{n}} \int_{0}^{\delta_{\text{sup}}} \sqrt{\log \frac{  3R_{\Theta}L^{\ell}_{1P_{n}} }{\delta} }d\delta \\
&= 12\sqrt{\frac{P}{n}} \int_{0}^{\delta_{\text{sup}}} \sqrt{\log \frac{3R_{\Theta}L^{\ell}_{1P_{n}}}{\delta_{\text{sup}}} + \log \frac{\delta_{\text{sup}}}{\delta} }d\delta \\
&\leq 12\sqrt{\frac{P}{n}} \int_{0}^{\delta_{\text{sup}}} \left( \sqrt{\log \left| \frac{3R_{\Theta}L^{\ell}_{1P_{n}}}{\delta_{\text{sup}}} \right| } + \sqrt{\log \frac{\delta_{\text{sup}}}{\delta} } \right)d\delta \\
&=12\delta_{\text{sup}} \sqrt{\frac{P}{n}} \left( \sqrt{\log \left| \frac{3R_{\Theta}L^{\ell}_{1P_{n}}}{\delta_{\text{sup}}} \right| } +\sqrt{ \frac{\pi}{2} } \right) \\
&=12 \sqrt{\frac{2r_{2}}{n} \sum_{i=1}^{n} \| x_{i} \|_{2}^{2} } \sqrt{\frac{P}{n}} \left( \sqrt{\log \left| \frac{3R_{\Theta}L^{\ell}_{1P_{n}}}{\sqrt{2r_{2}/n \sum_{i=1}^{n} \| x_{i} \|_{2}^{2} }} \right| } +\sqrt{ \frac{\pi}{2} } \right).
\end{aligned}
\end{equation*}

For the loss function class $\ell_{\mathcal{F}_{r_{1}}}$, the empirical Rademacher complexity can be bounded similarly, except that the integral can be terminated at $\delta_{\text{sup}} =  r_{1}R_{x}$. The final result is that
\begin{equation*}
\begin{aligned}
\mathcal{R}_{S}(\ell_{\mathcal{F}_{r_{1}}}) \leq 12r_{1}R_{x} \sqrt{\frac{P}{n}} \left( \sqrt{\log \left| \frac{3R_{\Theta}L^{\ell}_{1P_{n}}}{r_{1}R_{x}} \right| } +\sqrt{ \frac{\pi}{2} } \right).
\end{aligned}
\end{equation*}
\end{proof}

\begin{proof}[Proof of Theorem \ref{rademacher-jacobian}]
For the function class $\mathcal{J}_{2\mathcal{F}_{r_{2}}}$, the empirical Rademacher complexity of $\mathcal{J}_{2\mathcal{F}_{r_{2}}}$ can also be bounded via Dudley's integral as
\begin{equation}\label{dudley2}
\mathcal{R}_{S}(\mathcal{J}_{2\mathcal{F}_{r_{2}}}) \leq 12\int_{0}^{r_{2}} \sqrt{ \frac{\log \mathcal{N}(\delta, \mathcal{J}_{2\mathcal{F}_{r_{2}}}, L_{1}(P_{n}) }{n} }d\delta.
\end{equation}
The integral can be terminated at $r_{2}$, since for $\|\nabla_{x}f\|_{F}^{2} \in \mathcal{J}_{2\mathcal{F}_{r_{2}}}$, it holds that $P_{n}\|\nabla_{x}f\|_{F}^{2}=\frac1n\sum_{i=1}^{n} \|\nabla_{x}f(x_{i})\|_{F}^{2} \leq r$, and thus the $\delta$-covering number of $\mathcal{J}_{2\mathcal{F}_{r_{2}}}$ under the $L_{1}(P_{n})$ metric is 1 when $\delta \geq r_{2}$. 
The covering number $\mathcal{N}(\delta, \mathcal{J}_{2\mathcal{F}_{r_{2}}}, L_{1}(P_{n}) )$ can be bounded by the covering number of the parameter space $\mathcal{N}(\delta_{\theta}, \varTheta, \norm{\cdot}_{F} )$.
Specifically, let $\mathcal{C}_{\varTheta}$ be a $\delta_{\theta}$-cover of $\varTheta$, such that for each $\Theta \in \varTheta$, there exists $\Theta' \in \mathcal{C}_{\varTheta}$ satisfying $\norm{\Theta' - \Theta}_{F} \leq \delta_{\theta}$. According to Lemma \ref{lipschitz-jacobian}, one has that
\begin{equation*}
\| \|\nabla_{x}f_{\Theta}(x)\|^{2}_{F} - \|\nabla_{x}f_{\Theta'}(x)\|^{2}_{F} \|_{L_{1}(P_{n})} \leq L^{F}_{1P_{n}} \norm{\Theta-\Theta'}_{F}  \leq  \delta_{\theta} L^{F}_{1P_{n}}.
\end{equation*}
Thus the function set $\mathcal{C}_{\mathcal{J}_{2\mathcal{F}_{r_{2}}}}=\{ \|\nabla_{x}f_{\Theta'}(x)\|^{2}_{F}:\Theta' \in \mathcal{C}_{\varTheta}  \}$ is a $\delta$-cover of $\mathcal{J}_{2\mathcal{F}_{r_{2}}}$ with $\delta=\delta_{\theta}L^{F}_{1P_{n}}$ and it holds that
\begin{equation*}
\mathcal{N}(\delta, \mathcal{J}_{2\mathcal{F}_{r_{2}}}, L_{1}(P_{n}) ) \leq  \mathcal{N}(\delta / L^{F}_{1P_{n}}, \varTheta, \norm{\cdot}_{F} ) \leq \left( \frac{3R_{\Theta}L^{F}_{1P_{n}}}{\delta}\right)^{P}.
\end{equation*}
Then the Dudley's integral \eqref{dudley2} is bounded as follows
\begin{equation*}
\begin{aligned}
\mathcal{R}_{S}(\mathcal{J}_{2\mathcal{F}_{r_{2}}}) &\leq 12 \sqrt{ \frac{P}{n} } \int_{0}^{r_{2}} \sqrt{ \log  \frac{3R_{\Theta}L^{F}_{1P_{n}}}{\delta} }d\delta \\
&=12 \sqrt{ \frac{P}{n} } \int_{0}^{r_{2}} \sqrt{ \log  \frac{3R_{\Theta}L^{F}_{1P_{n}}}{r_{2}}+ \log \frac{r_{2}}{\delta} }d\delta \\
&\leq 12 \sqrt{ \frac{P}{n} } \int_{0}^{r_{2}} \left(\sqrt{ \left| \log  \frac{3R_{\Theta}L^{F}_{1P_{n}}}{r_{2}} \right| }+  \sqrt{\log \frac{r_{2}}{\delta} } \right)d\delta \\
&= 12r_{2} \sqrt{ \frac{P}{n} } \left(\sqrt{ \left| \log  \frac{3R_{\Theta}L^{F}_{1P_{n}}}{r_{2}} \right| }+  \sqrt{\frac{\pi}{2} } \right).
\end{aligned}
\end{equation*}

For the function class $\mathcal{J}_{1\mathcal{F}_{r_{1}}}$, the empirical Rademacher complexity can also be bounded similarly, except that the integral can be terminated at $r_{1}$ and the Lipschitz constant is $L^{1}_{1P_{n}}$. The final result is that
\begin{equation*}
\begin{aligned}
\mathcal{R}_{S}(\mathcal{J}_{1\mathcal{F}_{r_{1}}}) \leq 12r_{1} \sqrt{ \frac{P}{n} } \left(\sqrt{ \left| \log  \frac{3R_{\Theta}L^{1}_{1P_{n}}}{r_{1}} \right| }+  \sqrt{\frac{\pi}{2} } \right).
\end{aligned}
\end{equation*}
\end{proof}

\bibliographystyle{elsarticle-harv} 
\bibliography{arxiv-241130.bbl}



%
%
%
\end{document}